\documentclass[lettersize,journal]{IEEEtran}
\usepackage{amsmath,amsfonts}
\usepackage{algorithmic}
\usepackage{array}
\usepackage[caption=false,font=footnotesize,labelfont=sf,textfont=sf]{subfig}
\usepackage{textcomp}
\usepackage{stfloats}
\usepackage{url}
\usepackage{verbatim}
\usepackage{graphicx}
\usepackage{cite}
\hyphenation{op-tical net-works semi-conduc-tor IEEE-Xplore}

\usepackage{wrapfig}
\usepackage{caption}
\usepackage{subfig} 
\usepackage{subfloat}
\usepackage{subcaption}
\usepackage{cite}
\usepackage{booktabs}
\usepackage{amsmath,amssymb,amsfonts}
\usepackage{algorithmic}
\usepackage{graphicx}
\usepackage{diagbox}
\usepackage{textcomp}
\usepackage{xcolor}
\usepackage[numbers, sort&compress]{natbib}
\usepackage{hyperref}
\usepackage{ulem}
\def\BibTeX{{\rm B\kern-.05em{\sc i\kern-.025em b}\kern-.08em
		T\kern-.1667em\lower.7ex\hbox{E}\kern-.125emX}}
	



\usepackage[ruled,vlined]{algorithm2e}
\usepackage{multirow}
\usepackage{soul}
\usepackage{bbm}
\usepackage{amsthm}
\makeatletter
\hypersetup{
    colorlinks=true,
    linkcolor=blue,
    filecolor=magenta,
    urlcolor=cyan,
}


\newtheorem{theorem}{Theorem}

\newtheorem{lemma}[theorem]{Lemma}

\newtheorem{definition}[theorem]{Definition}

\DeclareMathOperator{\Tr}{Tr}

\begin{document}

\title{Robust Graph Contrastive Learning with Information Restoration}

\author{Yulin Zhu, Xing Ai, Yevgeniy Vorobeychik, and Kai Zhou\thanks{Prof. Kai Zhou is the corresponding author.}
\thanks{Dr. Yulin Zhu is with the Department of Computer Science, Hong Kong Chu Hai College. E-mail: {\tt ylzhu@chuhai.edu.hk}. Mr. Xing Ai and Prof. Kai Zhou are with the Department of Computing, The Hong Kong Polytechnic University. E-mail:{\tt xing96.ai@connect.polyu.hk, kaizhou@polyu.edu.hk}. Prof. Yevgeniy Vorobeychik is with the Department of Computer Science and Engineering, Washington University in St. Louis. E-mail: {\tt yvorobeychik@wustl.edu}.}
\thanks{This work has been submitted to the IEEE for possible publication.
	Copyright may be transferred without notice, after which this version may
	no longer be accessible.} 
}



\maketitle

\begin{abstract}
The graph contrastive learning (GCL) framework has gained remarkable achievements in graph representation learning. However, similar to graph neural networks (GNNs), GCL models are susceptible to graph structural attacks. As an unsupervised method, GCL faces greater challenges in defending against adversarial attacks. Furthermore, there has been limited research on enhancing the robustness of GCL. To thoroughly explore the failure of GCL on the poisoned graphs, we investigate the detrimental effects of graph structural attacks against the GCL framework. We discover that, in addition to the conventional observation that graph structural attacks tend to connect dissimilar node pairs, these attacks also diminish the mutual information between the graph and its representations from an information-theoretical perspective, which is the cornerstone of the high-quality node embeddings for GCL. Motivated by this theoretical insight, we propose a robust graph contrastive learning framework with a learnable sanitation view that endeavors to sanitize the augmented graphs by restoring the diminished mutual information caused by the structural attacks. Additionally, we design a fully unsupervised tuning strategy to tune the hyperparameters without accessing the label information, which strictly coincides with the defender's knowledge. Extensive experiments demonstrate the effectiveness and efficiency of our proposed method compared to competitive baselines.
\end{abstract}

\begin{IEEEkeywords}
Robust Graph Contrastive Learning, Graph Representation Learning, Adversarial Robustness
\end{IEEEkeywords}

\section{Introduction}
\label{Sec-intro}
Graph representation learning~\cite{node2vec, DeepWalk} has revolutionized the analysis of graph data, which is prevalent across diverse domains. In practice, the scarcity of ground-truth labels has led to a surge in research on unsupervised graph learning approaches. Among these methods, \underline{G}raph \underline{C}ontrastive \underline{L}earning (GCL)~\cite{GRACE, GCA, ARIEL, SPAN, 10597723} has emerged as a highly effective unsupervised approach, outperforming other methods on various downstream tasks, including semi-supervised node classification~\cite{GNN}.   

The label-preserving property of GCL is a key factor behind its main benefits~\cite{autogcl}. Specifically, the node embeddings generated by GCL, which incorporates both topological and semantic information, are consistent with node label information even without explicitly querying the node labels. This consistency arises from the \textit{homophily}~\cite{homophily, 10889118, GNNhomophily, HeteRobust} assumption of the graph data, which states that nodes tend to connect with ``similar" others. This phenomenon is widely observed in real-world graph data such as friendship networks~\cite{homophily}, political networks~\cite{politicalnetwork}, citation networks~\cite{CitationNetwork}, etc. By contrasting positive and negative samples of nodes that are similar or dissimilar in semantic information, the GCL framework encourages the GNN encoder to learn node embeddings that capture the homophily patterns present in the graph. 

However, similar to the vanilla \underline{G}raph \underline{N}eural \underline{N}etwork (GNN)~\cite{GNN, graphsage} and its variants, GCL models are also susceptible to graph structural attacks~\cite{Nettack, Mettack, CLGA, HRAT, TopologyAttack, GODZISZEWSKI2024104173, 10535517, 10693599, 10433700, 10296883, 10446170, 10646863,9679165}. 
In such attacks, the adversary can manipulate the graph topology by adding or deleting edges in the original graph to undermine the quality of node embeddings.
For instance, a malicious entity can manipulate social ties by connecting with normal accounts to evade graph-based anti-fraud systems~\cite{BinarizedAttack}. 
Indeed, existing research has already demonstrated the vulnerability of GCL under structural attacks. In particular, CLGA~\cite{CLGA} is an attack method specifically designed against GCL and has demonstrated effective attack performance.
Moreover, although Mettack~\cite{Mettack} is an attack method designed to attack semi-supervised GNN, extensive experiments~\cite{CLGA} demonstrate that it can also successfully attack GCL, at times generating attacks that are more potent than CLGA. 
Therefore, investigating potential countermeasures is essential to ensuring the security and robustness of GCL. 

Our main goal is to \textit{refine the GCL framework so that it is robust against graph structural attacks}. While there have been a series of advances in devising countermeasures against structural attacks for GNN models~\cite{RGCN, ProGNN, GCNJaccard, SimPGCN, GRV, HeteRobust, FocusedCleaner, XLX, GNNGUARD, GASOLINE}, they are primarily designed for semi-supervised settings and intimately require access to node label information. Since GCL is fully unsupervised, such methods are not readily applicable. Two recent approaches, ARIEL~\cite{ARIEL} and SPAN~\cite{SPAN}, specifically address the challenge of learning robust GCL models. ARIEL~\cite{ARIEL} employs an adversarial training approach by introducing an adversarial view into the contrastive learning process to improve robustness. SPAN~\cite{SPAN}, on the other hand, modifies the topology augmentation process by maximizing and minimizing the spectrum of the original graph to generate augmentation views.
However, 
the major drawback with these methods is that their strategy to mitigate graph structural attacks has the effect of further poisoning the graph's structure. For instance, the adversarial view of ARIEL injects adversarial noises into both the feature and topology spaces of the original
poisoned graph. Nevertheless, the adversarial noises could still have
persistent negative impact on graph representation learning, leading to limited defense performance. The same problem also exists in one of the augmented views of SPAN whose goal is to minimize the spectrum of the graph Laplacian. 

To avoid the above problem, in this paper, we take a quite different angle to achieve adversarial robustness, by
\textit{restoring} the graph's properties contaminated by adversarial attacks. Intuitively, refining the poisoned graph is more effective than adversarial training since the latter cannot omit the harmful effects of the existing adversarial noises. Besides, the observation that SPAN achieves better robustness than ARIEL is because of an augmented view generated by maximizing the spectrum of graph Laplacian, which can restore the diminished graph homophily~\cite{homophily} and thus lead to a ``cleaner" graph. Therefore, restoring the graph property is a promising and effective way to improve GCL robustness against attacks.

To this end, 
we begin by assessing the vulnerability of the GCL framework to graph structural attacks. Our focus is on identifying important graph properties that are significantly changed during the attack. Apart from the common observation that structural attacks will typically reduce graph homophily, we discover that these attacks will essentially degrade the mutual information estimation between the graph and its representations (refer to Sec.~\ref{sec-analyze-infoNCE}). This sheds light on how attacks affect the graph learning tasks from an information-theoretic perspective. Specifically, the mutual information between the graph and its representations serves as a crucial cornerstone for generating high-quality node representations within the GCL framework. The graph attacker undermines the quality of the GCL's node embeddings by reducing the alignment between the graph's information (including semantic and topology information) and the low-dimensional representations. Consequently, the poisoned GCL's node embeddings fail to capture the ``true" information of the graph data. This critical observation underscores the significance of restoring mutual information to effectively defend against such attacks.

Furthermore, GCL models are unsupervised approaches, meaning that the GCL's node embeddings are pre-trained without accessing the node labels. However, the current GCL models~\cite{GCA,ARIEL,SPAN,MVGRL,DGI} often rely on the 
\textit{validation label set} to tune the hyperparameters of the GCL framework, which contradicts the principles of unsupervised learning. 
Therefore, training robust GCL models in a fully unsupervised setting in an adversarial environment encounters two major challenges: \textbf{1)} How to restore the diminished mutual information caused by the noisy data during the training of the GCL framework? \textbf{2)} How to determine the vital hyperparameters of the GCL framework in a fully unsupervised setting?

To address the challenges, we propose a robust GCL framework termed \textbf{GCIR} that endeavors to restore diminished mutual information after structural attacks, thus achieving adversarial robustness. Specifically, \textbf{GCIR} novelty integrates a learnable \textit{sanitation view} into the framework and contrasts it with an augmented view (through feature masking or link dropping~\cite{GRACE,GCA}) to gradually restore the diminished mutual information.
The sanitation view is trained jointly with the GNN encoder 
in an end-to-end manner. 
To train the model in a fully unsupervised manner, we design an early stopping strategy based on a pseudo-normalized cut~\cite{NormCut} metric without requiring the label's information for the downstream tasks to determine the best choice of the vital hyperparameter.

The main contributions are summarized as follows:
\begin{itemize}
    \item We provide the vulnerability analysis of the GCL framework under graph structural attacks from an information theoretical perspective and verify that \textit{graph structural attacks will degrade the mutual information estimation between the graph and its representations}. This finding advances our understanding of attacks and provides valuable insights for developing defense approaches.
    \item We propose a novel robust GCL framework, termed \textbf{GCIR}, that achieves adversarial robustness by introducing a learnable sanitation view to restore the diminished mutual information during training. Moreover, we design an unsupervised tuning strategy to determine the best choice of the vital hyperparameter without label information. 
    \item Extensive experiments demonstrate the effectiveness of our proposed robust GCL framework across various attack scenarios.
\end{itemize}

\section{Related Works}
\label{sec-related}
\subsection{Graph Structural Attack}
Graph-based machine learning models have been shown to be vulnerable to structural attacks. 
Mettack~\cite{Mettack} formulated the global structural poisoning attacks on GNNs as a bi-level optimization problem and leveraged a meta-learning framework to solve it. 
CLGA~\cite{CLGA} deployed GCA~\cite{GCA} as the surrogate model and formulated graph structural attacks as a bi-level optimization problem to degenerate the performance of GCL models through poisoning. There are other structural attacks against graph-based machine learning models~\cite{BinarizedAttack, HRAT, TopologyAttack, bojchevski2019adversarial} such as graph anomaly detection, malware detection, random walks, etc. 

\subsection{Graph Contrastive Learning}
Contrastive learning is a widely used deep learning model that originated from SimCLR~\cite{SimCLR}, a simple self-supervised contrastive framework for learning useful representations for image data. 
GRACE~\cite{GRACE} extended the SimCLR framework to graph data by generating augmentation views through random link removal and feature masking. Later, GCA~\cite{GCA} prevents the removal of the important links during the stochastic augmentations by designing several link removal mechanisms based on degree centrality, etc. ARIEL~\cite{ARIEL} introduced an adversarial view via a PGD attack for robust training. SPAN~\cite{SPAN} explored spectrum invariance during augmentation views and generated augmentation views by maximizing and minimizing the spectral change. MVGRL~\cite{MVGRL} utilized a local-global contrastive loss to measure the agreement between the neighbor nodes and graph diffusion. DGI~\cite{DGI} generated augmentation views by corrupting the original graph and contrasting the node embeddings between the original graph and the corrupted graphs. BGRL~\cite{BGRL} introduced bootstrapped graph latent training by predicting alternative augmentations of the input without contrasting with negative samples. SPAGCL~\cite{SPAGCL} integrated the node similarity-preserving view and the adversarial view to enhance the adversarial robustness of GCL model. PiGCL~\cite{PiGCL} endeavored to mitigate the implicit conflicts from the negative pairs caused by the InfoNCE loss of GCL to boost its robust performance. GPS~\cite{GPS} deployed a special pooling mechanism to enhance the adversarial robustness of GCL on the graph classification problem.

However, the above-mentioned (robust)-GCL methods contain their unique limitations to some extent. For example, the adversarial view of ARIEL and SPAGCL injects adversarial noises into the feature and topology space of the graph data during training, which will still hinder the robust learning of GCL. The view that minimizes the spectral norm will introduce more inter-class links and produce similar detrimental effects on the graph representation learning. On the other hand, although PiGCL endeavors to improve the robustness of GCL by refining the gradients of the negative pairs, it still cannot essentially erase the adverse impact from the topology space of the graph data. Lastly, the adversarial pooling of GPS is specifically designed to boost the adversarial robustness of GCL on the graph classification task, which falls outside the scope of our topic (robustness of node classification). It is worth noting that unlike other baselines, our work focuses on sanitizing the detrimental effects of the adversarial attacks on the input space, which is supervised by a specially crafted sanitation view of the GCL framework from an information-theoretical perspective.    


\section{Preliminaries}
\label{sec-preliminaries}
In this section, we will provide a brief introduction to the prerequisite knowledge and notations for our work.  

\subsection{Graph Representation Learning}
Given an attribute graph $\mathcal{G}=\{\mathbf{X},\mathbf{A}\}$, where $\mathbf{X}\in\mathbbm{R}^{N\times p}$ is the nodal attribute matrix and $\mathbf{A}\in\mathbbm{R}^{N\times N}$ is the adjacency matrix, graph representation learning aims at training a GNN encoder $f_{\theta}: \mathcal{G}\rightarrow \mathbbm{R}^{N\times d}$ to produce low-dimensional embeddings $\mathbf{H}\in\mathbbm{R}^{N\times d}$ for each node. Then, the pre-trained node embeddings can be fed into a graph-related downstream task such as node classification. $\theta$ summarizes the model parameters to be trained via a pre-defined loss function.     

\subsection{Graph Contrastive Learning Framework}
GCL is a powerful graph representation learning framework that seeks to obtain high-quality node embeddings via maximizing the intractable mutual information between the graph data and its low-dimensional representations, i.e., 
\begin{equation}
    I(G, \mathbf{Z})=\int_{z}\int_{g}p(g,z)\log(\frac{p(g,z)}{p(g)p(z)})dgdz.
    \nonumber
\end{equation}
In order to achieve this goal, a common way is to maximize the similarity between positive pairs (same node with different views) and enlarge the difference between negative pairs (different nodes within the same view and cross different views).  
In general, the training of the GCL model has two stages. Firstly, it generates two augmentation views $G_1$ and $G_2$ based on the input graph $G$ via random link removal and feature masking. Then, Two graph samples generated from augmentation views are fed into a shared GNN encoder to produce node embeddings. Finally, a contrastive loss (InfoNCE loss) based on the node embeddings is optimized. 
The InfoNCE loss is formulated as:
\begin{equation*}
    \begin{split}
        &\mathcal{L}_{info}(G_1, G_2)=\frac{1}{2N}\sum_{i=1}^{N}(l(u_i,v_i)+l(v_i,u_i)), \\
        &l(u_i,v_i)= \\
        &-\log\frac{e^{\rho(u_i,v_i)/\tau}}{\begin{matrix}
        \underbrace{e^{\rho(u_i,v_i)/\tau}} \\ \text{positive}
        \end{matrix}+
        \begin{matrix}
        \underbrace{\sum_{k\neq i}e^{\rho(u_i,v_k)/\tau}} \\ \text{inter-view negative}
        \end{matrix}+
        \begin{matrix}
        \underbrace{\sum_{k\neq i}e^{\rho(u_i,u_k)/\tau}} \\ \text{intra-view negative}
        \end{matrix}},
    \end{split}
\end{equation*}
where $u_{i}$ is the $i$-th node for $G_1$, $v_{i}$ is the $i$-th node for $G_2$, and $\tau$ is a temperature hyperparameter. Usually, the similarity function $\rho(\cdot)$ is defined as: 
\begin{equation*}
    \begin{split}
        \rho(u_i,v_i)=cos(g(\mathbf{H}_{1i}), g(\mathbf{H}_{2i}))=\frac{g(\mathbf{H}_{1i})\cdot g(\mathbf{H}_{2i})}{\|g(\mathbf{H}_{1i})\|\cdot\|g(\mathbf{H}_{2i})\|},
    \end{split}
\end{equation*}
which represents the cosine similarity between the projected embeddings of $u_i$ and $v_i$, $g(\cdot)$ is the projection head to enhance the expressive power of the graph representation learning, $\mathbf{H}$ is the node embeddings to be detailed later. We use the two-layered graph convolution~\cite{GNN} as the shared GNN encoder for both two views to get the node embeddings $\mathbf{H}_1=f_{\theta}(G_1)$ and $\mathbf{H}_2=f_{\theta}(G_2)$ respectively. Specifically, $G_1=\{\mathbf{A}_1, \mathbf{X}_1\}$ and $G_2=\{\mathbf{A}_2, \mathbf{X}_2\}$; then we have
\begin{equation}
    \begin{split}
        &\mathbf{H}_l=f_{\theta}(\mathbf{A}_l,\mathbf{X}_l)=\sigma(\hat{\mathbf{A}}_l\sigma(\hat{\mathbf{A}}_l\mathbf{X}_{l}\mathbf{W}^{(1)})\mathbf{W}^{(2)}), \forall  l=1,2, \\
        &\text{where} \ \hat{\mathbf{A}}=\tilde{\mathbf{D}}^{-\frac{1}{2}}\tilde{\mathbf{A}}\tilde{\mathbf{D}}^{-\frac{1}{2}}, \  \tilde{\mathbf{A}}=\mathbf{A+I}, \  \tilde{D}_{ii}=\sum_{j}\tilde{A}_{ij}.
    \end{split}
    \label{eqn-GNN-encoder}
\end{equation}
After that, the graph contrastive learning framework is formulated as follows:
\begin{equation*}
    \begin{split}
        \min_{\theta} \ \mathcal{L}_{info}(t_1(\mathcal{G}), t_2(\mathcal{G}), \theta), \ \text{where} \  t_l\in\mathcal{T}_l, \  l=\{1,2\}.
    \end{split}
\end{equation*}
Then, the optimized node embeddings $\mathbf{H}^{*}=f_{\theta^{*}}(\mathbf{A},\mathbf{X})$ serve as the input to the logistic regression for semi-supervised node classification.


\section{Vulnerability Analysis of GCL}
\label{sec-vunlerability}
In this section, we present a detailed theoretical and empirical analysis of how existing attacks influence GCL, focusing on key graph properties. These explorations of the vulnerabilities of GCL provide strong insights for our design of the robust GCL framework.  
Without loss of generality, we choose two graph structural attacks, i.e., Mettack~\cite{Mettack} and CLGA~\cite{CLGA} as our graph attackers to simulate real-world attacking scenarios. Specifically, Mettack is the most representative structural attack against GNNs, as other ones~\cite{TopologyAttack} share similar attack loss with Mettack. Meanwhile, CLGA is the up-to-date representative attack method against the GCL framework. Therefore, exploring the underlying principles of Mettack and CLGA can sufficiently reveal the vulnerability of the GCL framework under structural attacks.   

\subsection{Threat Model}
We consider a system consisting of two parties: an attacker and a defender. Specifically, the attacker can manipulate the graph data to degenerate the graph-based learning models deployed by the defender. Conversely, the defender's objective is to recover the quality of the graph representation learning given the poisoned graph to improve node classification performance. To be concrete, we summarize the attacker's goal (Mettack and CLGA as exemplar), knowledge, and capability as follows:

\begin{itemize}
    \item \textbf{Attacker's goal}: The attacker’s goal is assumed to be decreasing the classification accuracy of a node classification problem achieved after training on poisoned graph data manipulated by the attacker. 
    \item \textbf{Attacker's knowledge}: The attacker can have different levels of knowledge of the data, model, and model parameters. For Mettack and CLGA, the attacker has no knowledge about the model with its parameters. In the meanwhile, the attacker can observe all nodes’ attributes, the graph structure. The main difference between Mettack and CLGA is that Mettack can observe the training labels. However, CLGA cannot query the label information of the graph data since it is an unsupervised attack method.
    \item \textbf{Attacker's capability}: In order to achieve unnoticeable attacks, Mettack and CLGA both impose a budget constraint $\Delta$ to restrict the number of perturbations on the topology space of the graph data, i.e., $\frac{1}{2}\|\mathbf{A}-\mathbf{A}^{p}\|\leq\Delta$, where $\mathbf{A}^{p}$ is the poisoned adjacency matrix.
    \item \textbf{Defender's knowledge}: In reality, the defender cannot acquire the attacking scenarios (such as the attack type, attack degree, and victim nodes or edges, etc.), and the label information of the graph data. However, the defender can have full knowledge of the given graph data's information (the node attribute matrix and adjacency matrix) as well as the model's information (model structure and model parameters).
\end{itemize}

\subsection{Disturbance on Mutual Information}
\label{sec-analyze-infoNCE}
\begin{figure}[h]
	\centering
	\includegraphics[width=0.48\textwidth,height=2.5cm]{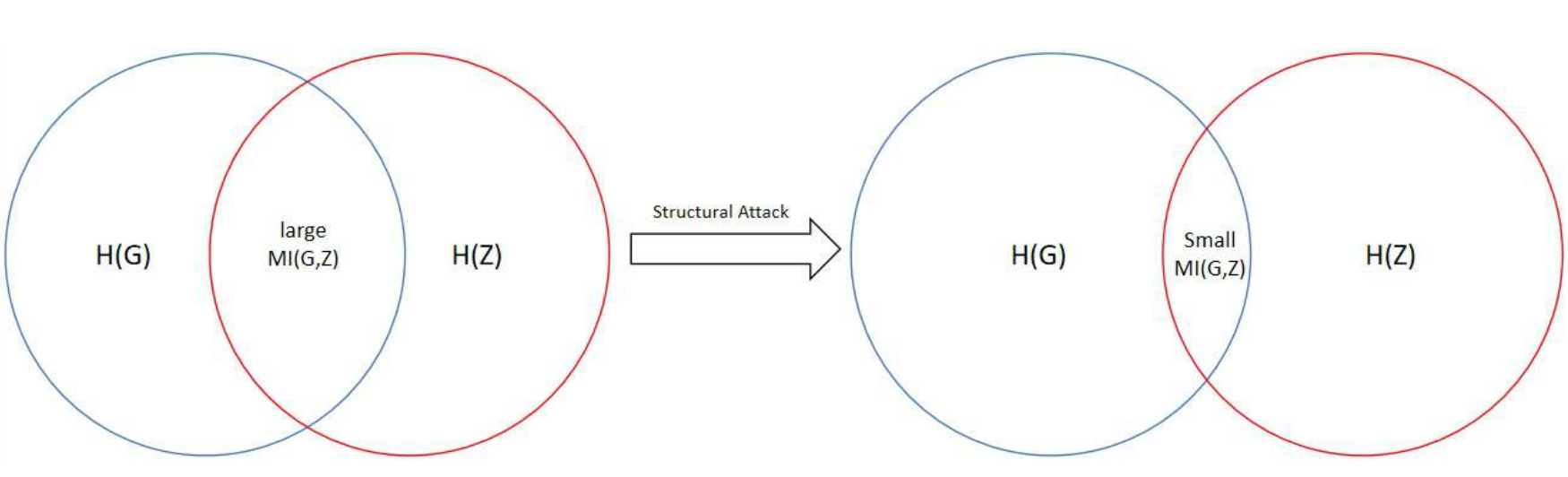}
	\caption{The blue and green circles represent the information entropy of the graph $G$ and embedding $\mathbf{Z}$, with the shaded area indicating the mutual information.} 
\end{figure}

The powerful graph representation learning of the GCL framework highly relies on capturing the mutual information between the graph and its representations in low-dimensional space. To explore the failure of the GCL framework under the adversarial attack scenario, we start by analyzing the vulnerability of the GCL framework from an information theoretical perspective. For graph contrastive learning, a common way to curve the intractable mutual information is to utilize the InfoNCE object:
\begin{lemma}[InfoNCE~\cite{infoNCE}]
\label{lemma-infoNCE}
The InfoNCE object~\cite{GCA} $I_{info}(\cdot)=-\mathcal{L}_{info}(\cdot)$ is the lower bound approximation of the intractable mutual information between the graph and its representations $I(G;\mathbf{Z}=f_{\theta}(G))$.
\end{lemma}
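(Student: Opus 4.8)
The plan is to follow the classical InfoNCE argument of~\cite{infoNCE}, adapted to the two-view graph setting, and then connect the resulting two-view mutual information to $I(G;f_\theta(G))$ via the data-processing inequality. Write $u=f_\theta(t_1(G))$ and $v=f_\theta(t_2(G))$ for the embeddings obtained from the two augmentation views, and let $p(u,v)$ be their joint law, induced by first drawing the graph $G$ and then applying the (conditionally independent) augmentations $t_1,t_2$; a positive pair $(u_i,v_i)$ is then a sample from $p(u,v)$, while the inter- and intra-view negatives play the role of samples from the product of marginals $p(u)p(v)$. The outer $\tfrac{1}{2N}\sum_i\big(l(u_i,v_i)+l(v_i,u_i)\big)$ in $\mathcal{L}_{info}$ only averages over nodes and symmetrizes the two directions, so it suffices to control one term $l(u_i,v_i)$.

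First I would rewrite $-l(u_i,v_i)$ as the log-likelihood of a softmax classifier that must identify the positive index among its $2N-1$ candidates, with logits $\rho(u_i,\cdot)/\tau$. The key step is that, for \emph{any} critic, this loss is lower-bounded by its value at the Bayes-optimal score $e^{\rho^{\star}(u,v)/\tau}\propto p(u,v)/\big(p(u)p(v)\big)=p(v\mid u)/p(v)$; substituting it and discarding the intra-view negative terms $\sum_{k\neq i}e^{\rho(u_i,u_k)/\tau}$ (which only enlarge the denominator and hence can only decrease $I_{info}$) yields $\mathcal{L}_{info}^{\star}=\mathbb{E}\log\!\big[1+\tfrac{p(v_i)}{p(v_i\mid u_i)}\sum_{k\neq i}\tfrac{p(v_k\mid u_i)}{p(v_k)}\big]$. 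Replacing the inner sum by its conditional mean $N-1$ --- here one invokes the normalization identity $\mathbb{E}_{v\sim p(v)}\big[p(v\mid u)/p(v)\big]=1$, and it is precisely this replacement, exact only as $N\to\infty$, that makes the conclusion a lower-bound \emph{approximation} --- and then using $1+x\ge x$ inside the increasing $\log$, we obtain $\mathcal{L}_{info}^{\star}\gtrsim\log(N-1)-I(u;v)$. Since $\mathcal{L}_{info}\ge\mathcal{L}_{info}^{\star}$ for the critic actually employed, rearranging and discarding the nonnegative $\log(N-1)$ term gives $I_{info}(G_1,G_2)=-\mathcal{L}_{info}\lesssim I(u;v)$, so the estimate survives for the loss as written.

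To reach the stated form I would then exploit the Markov structure $u\leftrightarrow G\leftrightarrow v$ --- the two augmented views are conditionally independent given $G$, and each embedding is a deterministic function of its own view --- so that the data-processing inequality gives $I(u;v)\le I(u;G)$; identifying the representation $\mathbf{Z}=f_\theta(G)$ with a view embedding, as is standard in the GCL/InfoMax framework, this reads $I(u;v)\le I(G;f_\theta(G))$, and chaining the two estimates delivers $-\mathcal{L}_{info}\lesssim I(G;f_\theta(G))$. The main obstacle is making the middle step rigorous: because the positive sample $v_i$ itself sits inside the softmax normalizer (the ``$+1$'' term), both the substitution of the optimal critic and the replacement of the negative sum by its mean are genuine approximations rather than exact inequalities, tight only in the large-batch / well-trained-encoder regime, which is exactly why the lemma must be stated as an \emph{approximation}. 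A secondary point is ensuring the data-processing step is not vacuously loose --- this requires the standard InfoMax premise that the augmentations $t_1,t_2$ preserve enough information about $G$, an assumption inherited from GRACE/GCA~\cite{GRACE,GCA}.
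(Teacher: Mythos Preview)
The paper supplies no proof of this lemma: it is stated with a citation to~\cite{infoNCE} and then invoked as a known result inside the proof of Theorem~\ref{theorem-attack-info}. Your reconstruction follows precisely the standard InfoNCE/CPC derivation from that reference---the Bayes-optimal-critic substitution, the large-$N$ replacement of the empirical negative sum by its expectation, and the resulting $\log(N{-}1)-\mathcal{L}$ bound---and then appends a data-processing step via the Markov structure $u\leftrightarrow G\leftrightarrow v$ to pass from $I(u;v)$ to $I(G;f_\theta(G))$. This is the intended argument, and it is correct at the level of rigor the lemma claims (a lower-bound \emph{approximation}, not an exact inequality). The one soft joint you already flag---identifying the representation $\mathbf{Z}=f_\theta(G)$ with a view embedding $f_\theta(t_l(G))$ so that the data-processing step lands on the right object---is likewise left implicit in the GCL/InfoMax literature the paper cites, so your proposal is at least as careful as the source it is meant to justify.
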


Then, we provide the theorem to curve the relationship between the graph attacker and the mutual information $I(G;\mathbf{Z})$.
\begin{theorem}
\label{theorem-attack-info}
    The 
    two attacks, Mettack and CLGA, maliciously degenerate the GCL's performance by diminishing the mutual information between the graph and its representations. 
\end{theorem}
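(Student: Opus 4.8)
\emph{Proof proposal.} The plan is to route both attacks through the InfoNCE bound of Lemma~\ref{lemma-infoNCE}: I will show that each attack's poisoning optimization either coincides with, or provably entails, a decrease of $I_{info}$ on the poisoned graph, and hence of the intractable mutual information $I(G^{p};f_{\theta}(G^{p}))$ that it lower-bounds, so that $GRV(G,G^{p})$ in Definition~\ref{lemma-GRV} becomes strictly positive.

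\emph{CLGA.} This case is essentially definitional. CLGA uses a GCL (GCA) surrogate and solves the bi-level program
\begin{gather*}
\theta^{*}(\mathbf{A}')=\arg\min_{\theta}\ \mathcal{L}_{info}\big(t_1(\mathcal{G}'),t_2(\mathcal{G}'),\theta\big),\\
\mathbf{A}^{p}=\arg\max_{\mathbf{A}'}\ \mathcal{L}_{info}\big(t_1(\mathcal{G}'),t_2(\mathcal{G}'),\theta^{*}(\mathbf{A}')\big),
\end{gather*}
with $\mathcal{G}'=\{\mathbf{X},\mathbf{A}'\}$ and $\mathbf{A}'$ restricted to the attack budget. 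By Lemma~\ref{lemma-infoNCE}, $\mathcal{L}_{info}=-I_{info}$ and $I_{info}\le I(G^{p};f_{\theta^{*}}(G^{p}))$, the bound tightening as the number of negative samples grows, so the outer ``$\max\mathcal{L}_{info}$'' is exactly ``$\min I_{info}$'', i.e.\ a minimization of the sharpest available estimate of $I(G^{p};f_{\theta^{*}}(G^{p}))$. Since honest GCL training on the clean graph maximizes the same quantity to $I(G;f_{\theta^{*}}(G))$, the attacker enforces $I(G^{p};f_{\theta^{*}}(G^{p}))<I(G;f_{\theta^{*}}(G))$, i.e.\ $GRV(G,G^{p})>0$.

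\emph{Mettack.} The subtlety here --- and the step I expect to be the main obstacle --- is that Mettack optimizes a meta-gradient of a \emph{supervised} surrogate loss $\mathcal{L}_{\text{train}}$ of a GCN, so Lemma~\ref{lemma-infoNCE} does not apply directly and a cross-objective argument is needed. I would close the gap in three steps. (i) Write one step of Mettack as flipping the edge that most increases $\mathcal{L}_{\text{train}}$ under the meta-gradient; by the empirical characterization of Section~\ref{sec-attack-homophily} (Fig.~\ref{fig-attack-vs-homo}, Tab.~\ref{tab-homo-attack}) these flips are overwhelmingly insertions of inter-class edges. (ii) Show that inserting an inter-class edge $(v_i,v_j)$ also increases $\mathcal{L}_{info}$: through the propagation $\hat{\mathbf{A}}\mathbf{X}$ in \eqref{eqn-GNN-encoder} it contaminates the aggregated representations of $v_i$ and $v_j$ with the opposite class, which lowers the positive score $\rho(u_i,v_i)$ relative to the negative terms in $l(u_i,v_i)$, hence raises $\mathcal{L}_{info}$ --- this per-edge estimate is the one genuinely non-routine calculation. (iii) Sum over the injected edges to obtain a net decrease of $I_{info}=-\mathcal{L}_{info}\le I(G^{p};f_{\theta^{*}}(G^{p}))$, and conclude $GRV(G,G^{p})>0$ as above.

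Putting the two cases together, in both attacks the contrastive optimum on the poisoned graph carries strictly less mutual information about the graph than on the clean graph --- which is the sense in which the attacks ``maliciously degenerate the GCL's performance by minimizing the mutual information between the graph and its representations'' --- and the only estimate requiring real work is the monotone effect of an inter-class insertion on $\mathcal{L}_{info}$ in the Mettack case.
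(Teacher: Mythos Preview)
Your CLGA case is essentially the paper's argument: CLGA's outer objective is $\max\mathcal{L}_{info}=\min I_{info}$, and Lemma~\ref{lemma-infoNCE} turns this into a drop of the mutual information bound. Nothing to add there.

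For Mettack, however, your route diverges substantially from the paper's and carries a real gap. The paper does \emph{not} pass through the empirical inter-class-edge observation at all. Instead it works directly with Mettack's own objective $\mathcal{L}_{CE}$ and derives an information-theoretic identity: decompose $\mathcal{L}_{CE}=H(\mathbf{Y}|\mathbf{Z})+D_{KL}(p_{YZ}\|p_Yp_Z)+D_{KL}(p_Y\|p_Z)$, then use the chain rule $I(G;\mathbf{Z})=H(G)-H(\mathbf{Y}|\mathbf{Z})+H(\mathbf{Y}|G,\mathbf{Z})-H(G|\mathbf{Y},\mathbf{Z})$ to write
\[
I(G;\mathbf{Z})=H(G)-\mathcal{L}_{CE}+D_{KL}(p_{YZ}\|p_Yp_Z)+D_{KL}(p_Y\|p_Z)+H(\mathbf{Y}|G,\mathbf{Z})-H(G|\mathbf{Y},\mathbf{Z}),
\]
from which the negative relation between $\mathcal{L}_{CE}$ and $I(G;\mathbf{Z})$ is read off, and Mettack's increase of $\mathcal{L}_{CE}$ is then translated into Eqn.~\eqref{eqn-attack-MI}. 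No appeal to Lemma~\ref{lemma-infoNCE} or to the InfoNCE loss is needed for Mettack.

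Your three-step plan for Mettack is problematic as a \emph{proof}. Step~(i) imports an empirical frequency statement (``overwhelmingly inter-class insertions'') as if it were a theorem about Mettack's optimizer; Table~\ref{tab-homo-attack} is evidence, not a derivation, and Mettack also deletes intra-class edges which your argument ignores. Step~(ii) is the crux and you leave it as a heuristic (``contaminates the aggregated representations\ldots lowers the positive score''): making this monotone effect rigorous for a trained two-layer encoder $f_{\theta^{*}}$ and arbitrary node pairs is not a routine estimate, and without it step~(iii) has nothing to sum. In short, your Mettack argument trades the paper's algebraic identity linking $\mathcal{L}_{CE}$ and $I(G;\mathbf{Z})$ for an empirical-plus-heuristic chain that does not close. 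If you want to keep a self-contained proof, follow the paper's decomposition of $\mathcal{L}_{CE}$ instead of detouring through inter-class edges and InfoNCE.
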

\begin{proof}
    We denote the poisoned graph generated by the graph attacker as $G^{p}=\{\mathbf{X},\mathbf{A}^{p}\}$, the corresponding poisoned node embeddings as $\mathbf{Z}^{p}=f_{\theta}(G^{p})$, where $f_{\theta}(\cdot)$ is the GNN encoder. Given the InfoNCE object $I_{info}(\cdot)=-\mathcal{L}_{info}(\cdot)$, if $G^{p}$ is generated by CLGA, we have
    \begin{equation}
        G^{p}=\arg\min_{G} \ I_{info}(t_{1}(G); t_{2}(G)),
    \end{equation}
    since the InfoNCE loss is the objective function of CLGA. Then, we have 
    \begin{equation}
    \label{eqn-attack-Info}
    \begin{split}
        I_{info}(t_{1}(G); t_{2}(G))>I_{info}(t_{1}(G^{p}); t_{2}(G^{p})), 
    \end{split}
    \end{equation}
    based on Lemma~\ref{lemma-infoNCE}, we have:
    \begin{equation}
    \label{eqn-attack-MI}
    \begin{split}
        I(G; \mathbf{Z}=f_{\theta}(G))>I(G^p; \mathbf{Z}^{p}=f_{\theta}(G^p)).
    \end{split}
    \end{equation}
    If $G^{p}$ is generated by Mettack, we have
    \begin{equation}
        \begin{split}
            G^p=\arg\max_{G} \ \mathcal{L}_{CE}(\mathbf{Z}^{p}, \mathbf{Y}), 
        \end{split}
    \end{equation}
    where $\mathcal{L}_{CE}$ is the cross-entropy loss. We then partition the $\mathcal{L}_{CE}$ as: 
    \begin{equation}
    \label{eqn-proof-CE}
        \begin{split}
            \mathcal{L}_{CE}&=H(\mathbf{Y})+D_{KL}(p_Y\|p_Z) \\
            &=H(\mathbf{Y}|\mathbf{Z})+I(\mathbf{Y};\mathbf{Z})+D_{KL}(p_Y\|p_Z) \\
            &=H(\mathbf{Y}|\mathbf{Z})+D_{KL}(p_{YZ}\| p_Y p_Z)+D_{KL}(p_Y\|p_Z),
        \end{split}
    \end{equation}
    Next, we build the relationship between mutual information $I(G;\mathbf{Z})$ with the conditional entropy $H(\mathbf{Y}|\mathbf{Z})$ as 
    \begin{equation}
    \label{eqn-proof-MI}
        \begin{split}
            I(G;\mathbf{Z})&=H(G)-H(G|\mathbf{Z})\\
            &=H(G)-H(\mathbf{Y}|\mathbf{Z})+H(\mathbf{Y}|G,\mathbf{Z})-H(G|\mathbf{Y},\mathbf{Z}),
        \end{split}
    \end{equation}
    Based on Eqn.~\ref{eqn-proof-CE} and \ref{eqn-proof-MI}, we have
    \begin{equation}
    \label{eqn-proof-MI-CE}
        \begin{split}
            I(G; \mathbf{Z})&=H(G)-\mathcal{L}_{CE}+D_{KL}(p_{YZ}\|p_{Y}p_{Z})\\
            &+D_{KL}(p_{Y}\|p_{Z})+H(\mathbf{Y}|G,\mathbf{Z})-H(G|\mathbf{Y},\mathbf{Z}).
        \end{split}
    \end{equation}
    Based on Eqn.~\ref{eqn-proof-MI-CE}, we observe that the mutual information $I(G;\mathbf{Z})$ is negatively correlated with $\mathcal{L}_{CE}$. Since Mettack can increase the $\mathcal{L}_{CE}$:
    \begin{equation}
        \begin{split}
            \mathcal{L}_{CE}(\mathbf{Z},\mathbf{Y})<\mathcal{L}_{CE}(\mathbf{Z}^{p},\mathbf{Y}),
        \end{split}
    \end{equation}
    then, Eqn.~\ref{eqn-attack-MI} also holds for the poisoned graph generated by Mettack. 
\end{proof}

\begin{figure}[h]
	\begin{center}
		\subfloat[Mettack]{\includegraphics[width=0.24\textwidth,height=3.2cm]{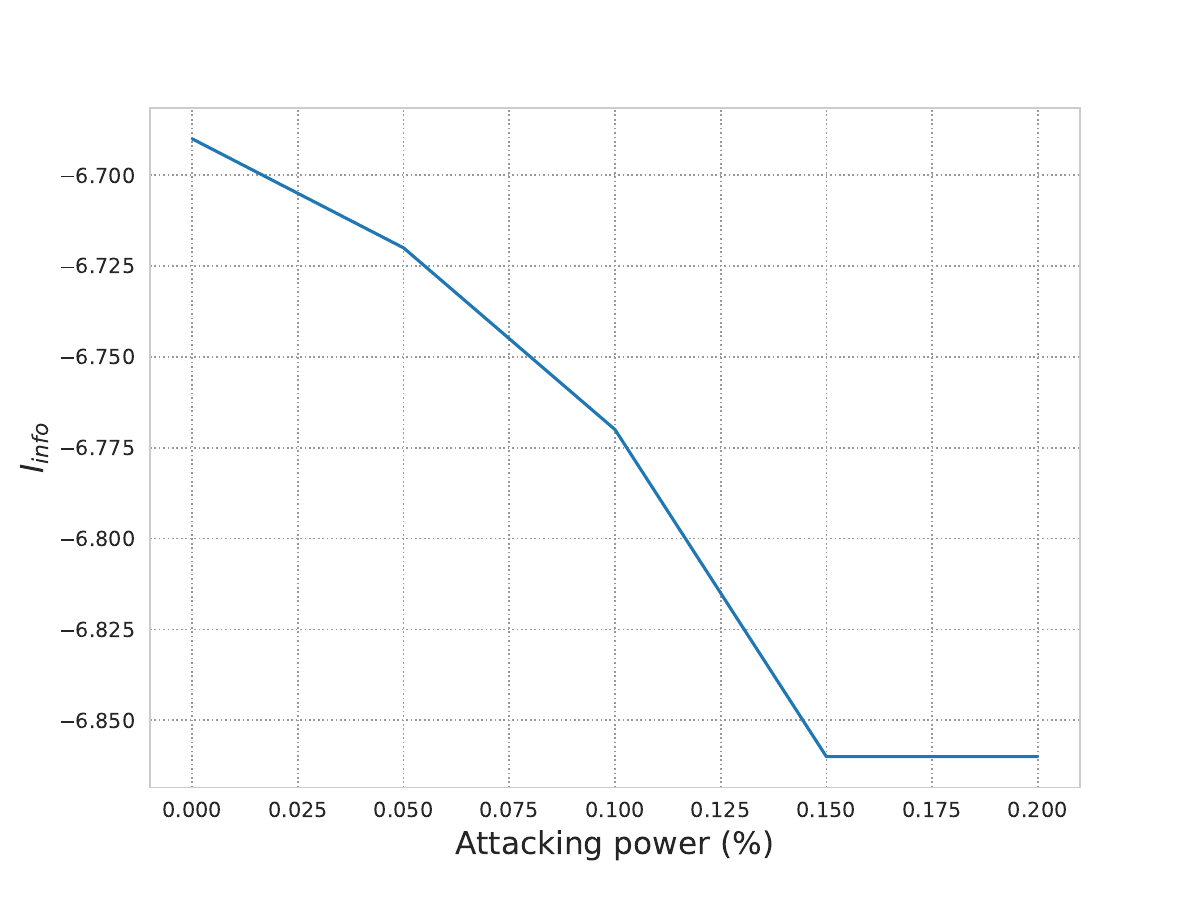}}
        \hfill		
        \subfloat[CLGA]{\includegraphics[width=0.24\textwidth,height=3.2cm]{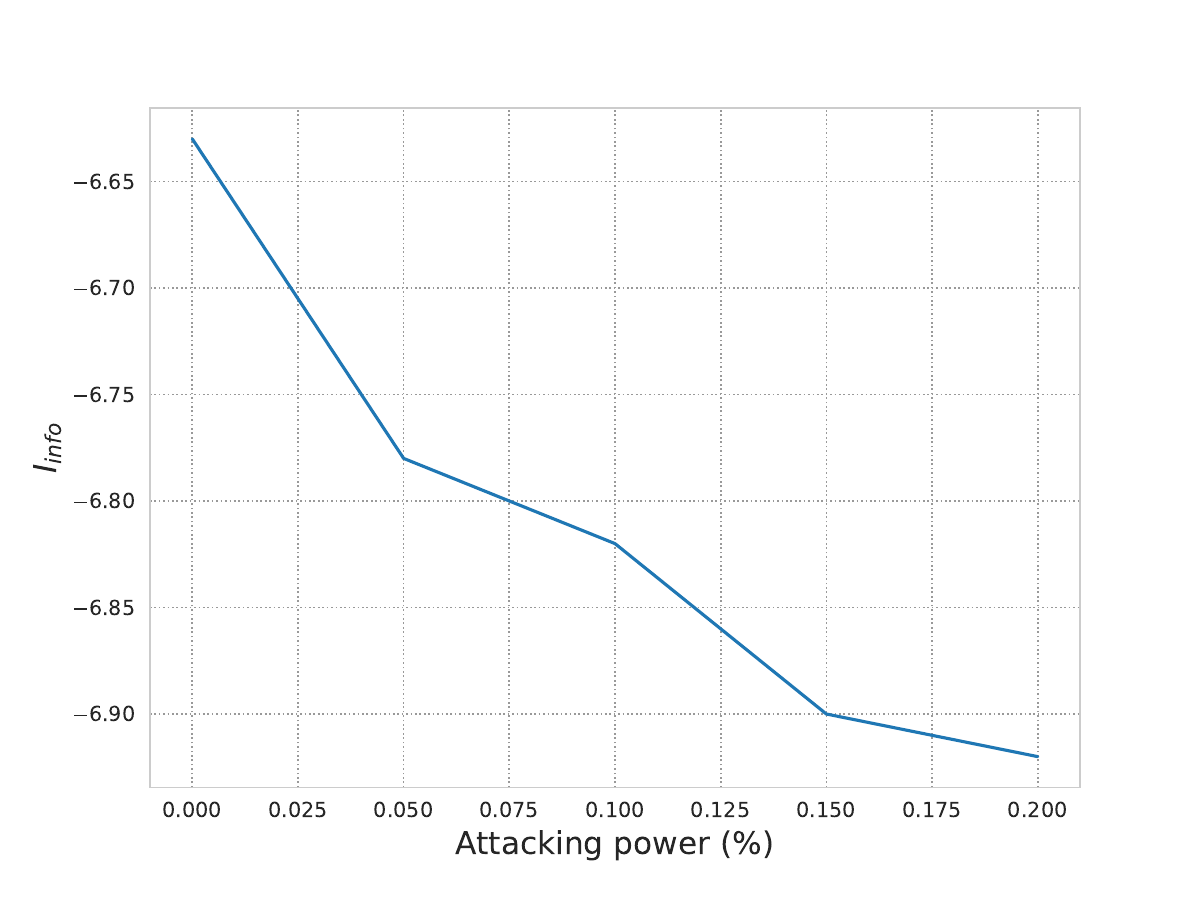}}
	\end{center}
	\caption{$I_{InfoNCE}$ for Cora dataset attacked by Mettack and CLGA with varying attacking powers.}
	\label{fig-attack-vs-info}
\end{figure}

Theorem \ref{theorem-attack-info} demonstrates that by maliciously altering the graph topology, attacks can degrade the mutual information estimation between the graph data and its representations. On the other hand, GCL's high-quality node representations are derived from maximizing this mutual information. Therefore, the GCL model may only be able to optimize the mutual information to a sub-optimal level, resulting in the production of low-quality embeddings. We further conduct empirical studies to verify whether the structural attacks may maliciously influence mutual information. We first implement Mettack and CLGA individually to poison the clean graph (Cora~\cite{CitationNetwork}) under different attacking scenarios. Next, we train a simple GCL model, i.e., GRACE~\cite{GRACE} on the poisoned graphs and report the converged InfoNCE object in Fig.~\ref{fig-attack-vs-info}. The results demonstrate that the structural attacks on GRACE indeed degrade the mutual information estimation between the graph and its representations.

\subsection{Disturbance on Graph Homophily}
\label{sec-analyze-homo}
It has been widely explored that structural attacks against GNNs (Mettack) will diminish the homophily level of the clean graphs~\cite{GCNJaccard,ProGNN,GNNGUARD,GNNhomophily,SimPGCN}. In this paper, we also explore whether the above-mentioned phenomenon is a common principle for graph structural attacks, including Mettack and CLGA. 
In fact, the graph homophily is defined as:
\begin{definition}[Graph homophily on label space~\cite{homophily}]
    The graph homophily based on node labels is given by:
    \begin{equation}
        \begin{split}
        h_{y}=\frac{1}{\mathcal{E}}\sum_{(v_{i},v_{j})\in\mathcal{E}}\mathbbm{1}_{(y_{i}=y_{j})}, \label{eqn-homo-label}
        \end{split}
    \end{equation}
    where $y_{{i}}$ is the label of node $v_{i}$, $\mathcal{E}$ is the link set.
\end{definition}

Besides using node labels, the graph homophily can also be measured by the Euclidean distance between the attribute vectors of the connected nodes~\cite{XLX,ProGNN}. It is common to use a trace formula to reformulate such distances in matrix format. We then present another metric on graph homophily as follows:
\begin{definition}[Graph homophily on feature space~\cite{ProGNN}]
    The graph homophily based on node attributes is given by:
    \begin{equation}
        \begin{split}
            \delta_{x}=\frac{1}{2}\sum_{i,j=1}^{N} A_{ij}(\frac{\mathbf{x}_{i}}{\sqrt{d_{i}}}-\frac{\mathbf{x}_{j}}{\sqrt{d_{j}}})^2=tr(\mathbf{X}^{T}\mathbf{L}\mathbf{X}).
        \end{split}
    \end{equation}
    where $\mathbf{L}=\mathbf{I}-\mathbf{\tilde{D}}^{-\frac{1}{2}}\mathbf{\tilde{A}}\mathbf{\tilde{D}}^{-\frac{1}{2}}$ is the normalized graph Laplacian, $\mathbf{\tilde{A}}$ is the adjacency matrix with self-loop. 
\end{definition}
A higher $h_y$ (or a lower $\delta_x$) indicates that the graph tends to be more homophilous. We conduct empirical studies to verify whether \textit{structural attacks will diminish the graph homophily} is a common principle for Mettack and CLGA. Fig.~\ref{fig-attack-vs-homo} presents the relationship between the node classification accuracy of GRACE~\cite{GRACE} and the graph homophily metrics ($h_{y}$ and $\delta_{x}$) on poisoned graphs with varying attacking powers. It is clearly observed that the accuracy is positively correlated with $h_y$ and is negatively correlated with $\delta_x$. That is, both attacks will significantly diminish the graph homophily.

\begin{figure}[h]
	\begin{center}
		\subfloat[Mettack]{\includegraphics[width=0.24\textwidth,height=3.2cm]{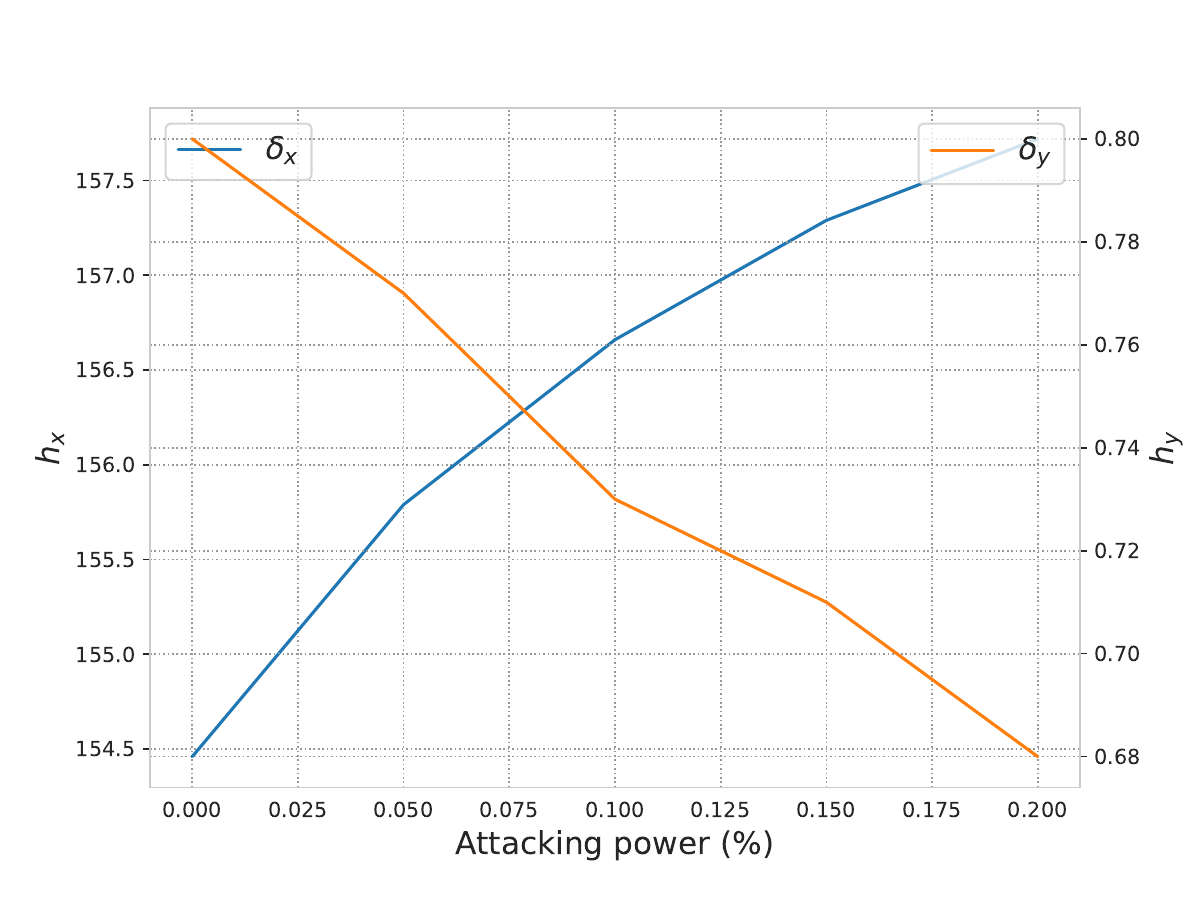}}
        \hfill		
        \subfloat[CLGA]{\includegraphics[width=0.24\textwidth,height=3.2cm]{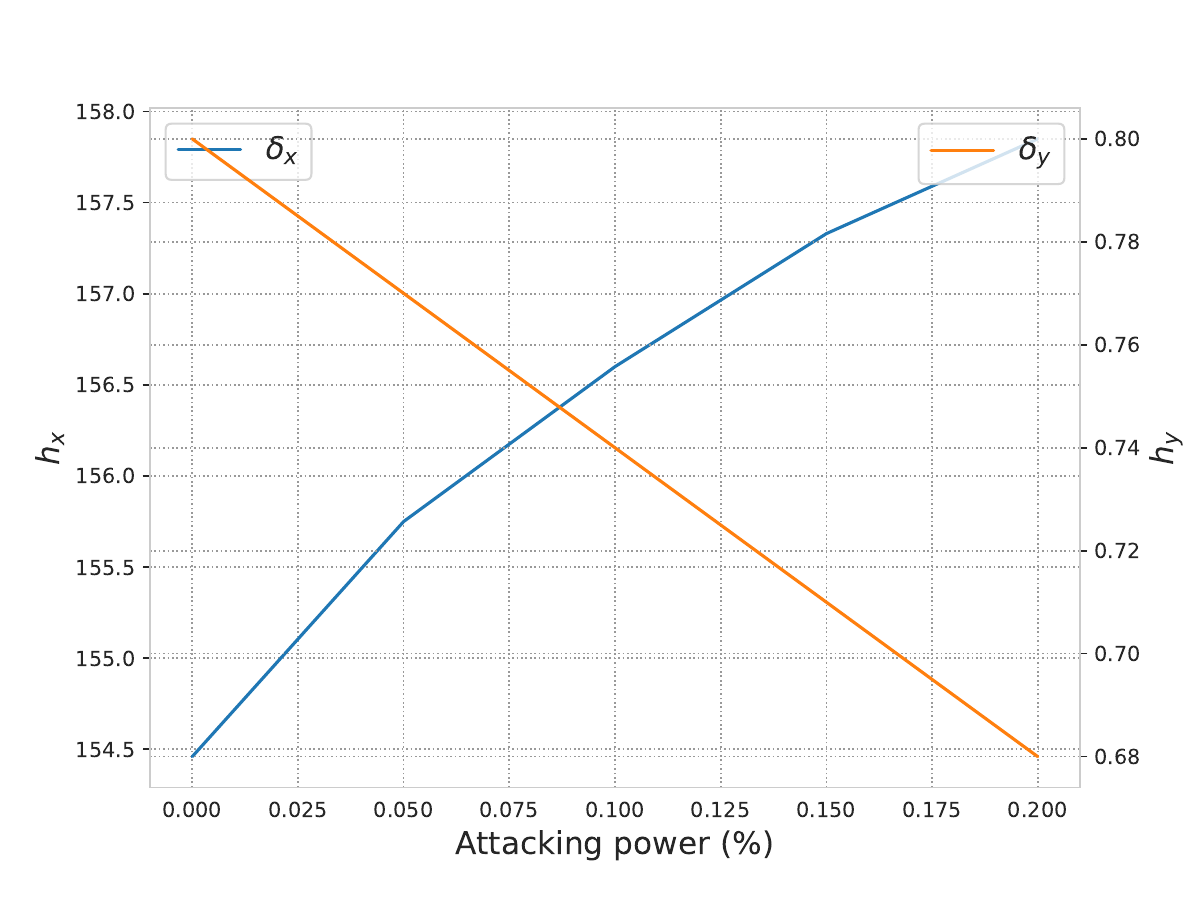}}
	\end{center}
	\caption{Graph homophily $\delta_x$ and $h_y$ for Cora dataset attacked by Mettack and CLGA with varying attacking powers.}
	\label{fig-attack-vs-homo}
\end{figure}

Overall, the findings in Sec.~\ref{sec-analyze-infoNCE} and \ref{sec-analyze-homo} that adversarial attacks will provide malicious effects to the mutual information and the graph homophily can serve as strong insights for us to design the defense strategy for the GCL framework. That is, restoring the diminished mutual information and graph homophily is promising to sanitize the malicious effects and thus achieve adversarial robustness.   

\section{Defense Methodology}
\label{Sec-Methodology}
\begin{figure*}
    \centering
    \includegraphics[width=0.85\textwidth,height=6.cm]{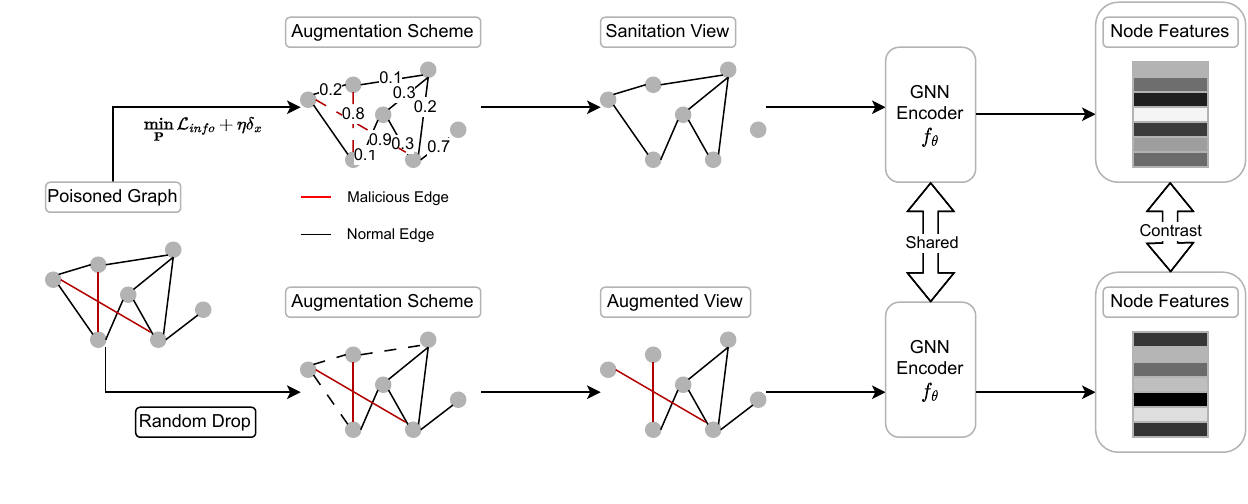}
    \caption{The overall architecture of \textbf{GCIR}.}
    \label{fig-overview}
\end{figure*}

\subsection{Overview} 
The proposed model's overall framework is illustrated in Fig.~\ref{fig-overview}. The model comprises a data augmentation phase and a contrastive learning phase. In the first phase, we introduce a learnable sanitation view to restore the diminished mutual information and graph homophily. In the second phase, the sanitation view is contrasted with the second augmented view, which is produced by random link removal and feature masking. Finally, we optimize a cross-view contrastive learning objective and the learnable sanitation view in an end-to-end manner. In this way, the learned node embeddings will restore the diminished mutual information and achieve adversarial robustness. 

\subsection{Sanitation View Generation}
As previously mentioned in Sec.~\ref{Sec-intro}, we endeavor to achieve the robustness of GCL via restoring the contaminated mutual information after attacks. Unlike the traditional adversarial training method that achieves robustness by further ``polluting" the poisoned graph data, we instead choose to ``sanitize" the poisoned graph data to rectify the mutual information distortion caused by the attacks during the data augmentation phase. To achieve this goal, we introduce the sanitation view with a stochastic edge-dropping scheme to sanitize malicious effects provided by attacks. To be concrete, the stochastic edge-dropping scheme of the sanitation view can adaptively restore the mutual information during graph data augmentation and thus achieve robustness. Specifically, we use an edge vector $\mathbf{E}$ to represent all the existing edges in a graph. Then, we define a manipulation vector $\mathbf{M} = \{0,1\}^{|\mathbf{E}|}$ that follows the Bernoulli distribution parameterized by $\mathbf{P}$; that is $\mathbf{M}\sim Ber(\mathbf{P})$. For a particular entry $\mathbf{M}_e$, we will remove the corresponding edge $e$ if $\mathbf{M}_e = 1$, which is controlled by the probability $\mathbf{P}_e$. Now, the sanitation view can be regarded as a mapping $\mathcal{T}_{\mathbf{P}} (\mathbf{E}) \rightarrow \mathbf{E}_{\mathcal{S}}$ parameterized by $\mathbf{P}$, which takes a graph with edge vector $\mathbf{E}$ as input and produces a sanitized view $\mathbf{E}_{\mathcal{S}}$. Formally, we have:
\begin{equation}
    \begin{split}
        \mathbf{E}_{\mathcal{S}}=\mathcal{T}_{\mathbf{P}}(\mathbf{E}) \triangleq \mathbf{E}\odot(\mathbf{1}_{|\mathbf{E}|}-\mathbf{M}), \ \ \mathbf{M}\sim Ber(\mathbf{P}),
    \end{split}
    \label{eqn-san-sampling}
\end{equation} 
where $\odot(\cdot)$ is the element-wise product operation, $\mathbf{1}_{|\mathbf{E}|}$ is an all-one vector with length equal to the edge number ${|\mathbf{E}|}$, $\mathbf{P}=\{p_{e}\}_{e=1}^{|\mathbf{E}|}$ is the parameters of the sanitizer. 

Thus, the remaining task amounts to choosing the proper parameters $\mathbf{P}$ for the sanitizer $\mathcal{T}_{\mathbf{P}}$ so that the generated sanitation views can facilitate contrastive learning to achieve adversarial robustness. Next, we show how to learn the parameters $\mathbf{P}$. 

\subsubsection{Learning of Robust GCL}
The key challenges to learning $\mathcal{T}_{\mathbf{P}}$ are two-fold: choosing the proper objective to guide learning, and integrating learning into the framework of GCL. 
We design an objective function that contains two components, i.e., InfoNCE loss and feature smoothness. Intuitively, forcing the learning of the sanitation view to decrease the InfoNCE loss and feature smoothness can restore the diminished mutual information and graph homophily during training. Specifically, the objective function is :
\begin{equation}
    \begin{split}
        \min_{\mathbf{P}\in\mathcal{S}_1, \theta} \ &\mathcal{L}=\mathcal{L}_{info}(f_{\theta}(\mathbf{E}_{\mathcal{S}}), f_{\theta}(t_\mathcal{R}(\mathbf{E})))+\eta\Tr(\mathbf{X}^{\top}\mathbf{L}_{\mathcal{S}}\mathbf{X}), \\ 
        &\text{s.t.} \ \mathbf{E}_{\mathcal{S}}=\mathbf{E}\odot(\mathbf{1}_{|\mathbf{E}|}-\mathbf{M}), \ \mathbf{M}\sim Ber(\mathbf{P}),\\
        & \quad \  t_{\mathcal{R}}(\mathbf{E})\in\mathcal{T}_{\mathcal{R}}, \ \mathbf{L}_{\mathcal{S}}=\mathbf{I}-\tilde{\mathbf{D}}_{\mathcal{S}}^{-\frac{1}{2}}(\mathbf{A}_{\mathcal{S}}+\mathbf{I})\tilde{\mathbf{D}}_{\mathcal{S}}^{-\frac{1}{2}},
    \end{split}
    \label{eqn-san-obj-2}
\end{equation} 
where the first view $\mathbf{E}_{\mathcal{S}}$ is the learnable sanitation view, the second augmented view $t_\mathcal{R}(\mathbf{E})$ is generated from typical random augmentation scheme $\mathcal{T}_R$ that will randomly drop edges and feature masking with a fixed probability, $\Tr(\cdot)$ represents the trace of a matrix  and $\mathbf{S}_1$ is a constrained space of the sanitation operation with $\epsilon$ controlling the sanitation degree: 
\begin{equation}
    \begin{split}
        \mathbf{S}_1=\{s|s\in[0,1]^{E}, \|s\|_1\leq \epsilon\}.
    \end{split}
\end{equation}
$\mathbf{L}_{\mathcal{S}}$ is the graph Laplacian based on the sanitation view $\mathbf{E}_{\mathcal{S}}$.


In our design, the training objective of the sanitation view is embedded into that of the GCL. That is, we train the learnable sanitation view together with the contrastive learning, using a hyperparameter $\eta$ to control the relative importance of optimizing InfoNCE loss and the graph homophily $\delta_{x}$. %
Then, the learnable sanitation view samples $\mathbf{E}_{\mathcal{S}}$ based on Eqn.~\ref{eqn-san-sampling} for each iteration. After that, the shared GNN encoder $f_{\theta}(\cdot)$ converts the graph data from the two views to low-dimensional node embedding matrices and obtains the training loss $\mathcal{L}$. During the backward pass, we compute the gradient of $\mathcal{L}$ with respect to the parameters $\mathbf{P}$ and $\theta$ and update them via gradient descent. Consequently, the sanitation view can restore the diminished mutual information guided by the sanitation view's objection. 

\subsubsection{Learning of Stochastic Edge-dropping Scheme}
However, the above-mentioned learning scheme still encounters a unique challenge: It is difficult to directly optimize the stochastic edge-dropping scheme of the sanitation view due to the undifferentiable sampling procedure. To tackle this issue, we refer to the Gumbel-Softmax re-parametrization technique~\cite{GumbelSoftmax} to relax the discrete Bernoulli sampling procedure to a differentiable Bernoulli approximation. Using the relaxed sample, we implement the straight-through estimator~\cite{STE} to discretize the relaxed samples in the forward pass and set the gradient of discretization operation to $1$, i.e., the gradients are directly passed to the relaxed samples rather than the discrete values. Hence, we reconstruct the mapping from $\mathbf{P}$ to $\mathbf{M}$ as:
\begin{equation}
    \begin{split}
        \mathbf{M}_{e}=\lfloor\frac{1}{1+e^{-(\log\mathbf{P}_e+g)/\tau}}+\frac{1}{2}\rceil,
    \end{split}
\end{equation}
where $\lfloor\cdot\rceil$ is the rounding function, $g\sim Gumbel(0,1)$ is a standard Gumbel random variable with zero mean and the scale equal to $1$. Then, we can reformulate Eqn.~\ref{eqn-san-sampling} to:
\begin{equation}
    \begin{split}
        &\mathbf{E}_{\mathcal{S}}=\mathbf{E}\odot(\mathbf{1}_{|E|}-\lfloor\frac{1}{1+e^{-(\log\mathbf{P}+g)/\tau}}+\frac{1}{2}\rceil), \\ 
        &\text{where} \ \mathbf{P}\in\mathcal{S}_1=\{\mathbf{P}_{e}|\sum_{e}\mathbf{P}_{e}\leq \epsilon, \mathbf{P}_{e}\in[0,1] \ \forall e\in\{1,..,E\}\}. 
    \end{split}
    \label{eqn-san-Gumbel-sampling}
\end{equation}  
Formally, to optimize the parameters $\mathbf{P}$ under the constraint $\mathcal{S}_{1}$ defined in Eqn.~\ref{eqn-san-Gumbel-sampling}, we deploy the projection gradient descent to optimize the relaxed constrained optimization problem introduced in Eqn.~\ref{eqn-san-Gumbel-sampling}:
\begin{equation}
    \begin{split}
        \mathbf{P}^{(t)}=\textstyle\prod_{\mathcal{S}_1}(\mathbf{P}^{(t-1)}-\alpha\nabla_{\mathbf{P}^{(t-1)}}\mathcal{L}(t_1(\mathcal{G}), t_2(\mathcal{G}), \theta))
    \end{split}
    \label{eqn-PGD}
\end{equation}
at the $t$-th iteration, where $\alpha$ is the learning rate of the projection gradient descent, $\nabla_{\mathbf{P}}\mathcal{L}(t_1(\mathcal{G}),t_2(\mathcal{G}),\theta)$ denotes the gradient of the loss defined in Eqn.~\ref{eqn-san-obj-2}, $\prod_{\mathcal{S}_1}(\cdot)$ is the projection operator to project the updated parameters to satisfy the constraint. Referring to \cite{TopologyAttack}, the projection operator $\prod_{\mathcal{S}_1}(\cdot)$ has the closed-form solutions:
\begin{equation*}
    \begin{split}
        \textstyle\prod_{\mathcal{S}_1}(\mathbf{P})=\begin{cases}
                                            \prod_{[0,1]}[\mathbf{P}-\mu\mathbf{1}_{E}],&\mbox{if } \mu>0\mbox{ and }
                                            \\ &\sum_{e}\prod_{[0,1]}[\mathbf{P}-\mu\mathbf{1}_{E}]=\epsilon, \\ \\ 
                                            \prod_{[0,1]}[\mathbf{P}],&\mbox{if } \sum_{e}\prod_{[0,1]}[\mathbf{P}]\leq\epsilon, 
                                        \end{cases}
    \end{split}
\end{equation*}
where $\prod_{[0,1]}[\mathbf{P}]$ clips the parameters vector $\mathbf{P}$ into the range $[0,1]$. 
The bisection method~\cite{bisection} is used over $\mu\in\{\min(\mathbf{P}-\mathbf{1}_{E}), \max(\mathbf{P})\}$ to find the solution to $\sum_{e}\prod_{[0,1]}[\mathbf{P}-\mu\mathbf{1}_{E}]=\epsilon$ with the convergence rate $\log_{2}[(\max(\mathbf{P})-\min(\mathbf{P}-\mathbf{1}_{E}))/\xi]$ with $\xi$-error tolerance~\cite{liu2015sparsity}. We present the details of algorithm in Alg.~\ref{alg-RGCL}.
\begin{algorithm}[htp]
    \caption{\textbf{GCIR}}
    \flushleft
    \label{alg-RGCL}
    \textbf{Input}: Poisoned graph $\mathcal{G}=\{\mathbf{E},\mathbf{X}\}$, $\eta$.\\
        \textbf{Parameters}: Sanitation probability $\mathbf{P}$, GNN encoder $f_{\theta}(\cdot)$ with weight matrices $\theta=\{\mathbf{W}^{(1)}, \mathbf{W}^{(2)}\}$.\\
    \textbf{Output}: Sanitized node embeddings matrix $\mathbf{H}_1$.\\
    \begin{algorithmic}[1] 
        \STATE Let $t=0$, initialize parameters $\mathbf{P}=\mathbf{P}^{0}$, $\theta=\theta^{0}$.
		\WHILE{$t\leq T$}
        \STATE Sample $g\sim Gumbel(0,1)$.
        \STATE Sample sanitation view $\mathcal{G}_{1}^{t}$ from $\mathbf{E}_{\mathcal{S}}^{t}=\mathbf{E}\odot(\mathbf{1}_{E}-\lfloor\frac{1}{1+e^{-(\log\mathbf{P}^{t}+g)/\tau}}+\frac{1}{2}\rceil)$.
        \STATE Sample another view $\mathcal{G}_{2}^{t}$ by randomly dropping links. 
        \STATE Compute graph Laplacian $\mathbf{L}^{t}=\mathbf{I}-(\tilde{\mathbf{D}}^{t})^{-\frac{1}{2}}\tilde{\mathbf{A}}^{t}(\tilde{\mathbf{D}}^{t})^{-\frac{1}{2}}$.
        \STATE Compute loss $\mathcal{L}=\mathcal{L}_{info}(\mathcal{G}_{1}^{t}, \mathcal{G}_{2}^{t})+\eta\Tr(\mathbf{X}^{\top}\mathbf{L}^{t}\mathbf{X})$.
        \STATE Projection gradient descent and update $\mathbf{P}^{t}$ from Eqn.~\ref{eqn-PGD}.
        \STATE Gradient descent and update $\theta^{t}$ based on $\mathcal{L}$.
        \STATE Compute pseudo normalized cut loss $\mathcal{L}_{pnc}^{t}$ from Eqn.~\ref{eqn-pseudo-normalized-cut}. 
        \ENDWHILE
        \STATE Get the best iteration $t^{*}$ with the minimum value $\mathcal{L}_{pnc}^{t^{*}}$.
        \RETURN Sanitized node embeddings $\mathbf{H}_1=f_{\theta^{t^{*}}}(\mathbf{A}^{t^{*}},\mathbf{X})$.
	\end{algorithmic}
\end{algorithm} 

\subsection{Unsupervised Tuning Strategy}
\label{sec-unsupervised-tuning}
Strictly speaking, the whole training procedure of the GCL models is label-free, that is, the data analyst cannot query the label's information from the downstream tasks to supervise the training of GCL, including the parameter tuning phase. Hence, it is vital to craft a totally unsupervised tuning strategy to select the best hyperparameters. We herein concentrate on designing a new metric in an unsupervised manner without additional training or fine-tuning. 

In particular, normalized cut loss~\cite{NormCut} is an unsupervised loss function that counts the fraction of the inter-class links for a given graph partition result. Thus, a good graph partition should obtain a lower normalized cut loss, i.e., the majority of the links connect two nodes with the same class. The normalized cut loss is defined as:
\begin{equation}
    \begin{split}
        &\mathcal{L}_{nc}=\frac{1}{K}\Tr((\mathbf{C}^{\top}\mathbf{LC})\oslash(\mathbf{C}^{\top}\mathbf{DC})), \ \mathbf{L}=\mathbf{I}-\hat{\mathbf{A}}.
    \end{split}
    \label{eqn-normalized-cut}
\end{equation}
Here $\mathbf{C}$ is the cluster assignment matrix, i.e., $C_{ij}$ represents the node $i$ belongs to the cluster $j$, $\mathbf{L}$ is the graph Laplacian matrix, $\oslash$ represents element-wise division. We propose to utilize this normalized cut loss as the supervision for searching the vital parameter $\eta$ in Eqn.~\ref{eqn-san-obj-2}. The intuition is that if the generated node embeddings have high quality, the clustering result should be accurate, leading to a low loss value. In fact, several previous works~\cite{maskGVAE} have utilized this loss as supervision in different ways. 
Specifically, during the $t$-th iteration, we obtain the sanitized adjacency matrix $\mathbf{A}_{\mathcal{S}}^{(t)}$ (corresponding with $\mathbf{E}_{\mathcal{S}}^{(t)}$ in Eqn.~\eqref{eqn-san-Gumbel-sampling}). 
Then we feed it into a two-layered GNN encoder to the sanitized node embeddings $\mathbf{H}_{\mathcal{S}}^{(t)}$
and get the pseudo normalized cut:
\begin{equation*}
	\small
    \begin{split}
        \mathcal{L}_{pnc}^{(t)}=\Tr((\sigma(\mathbf{H}_{1}^{(t)})^{\top}\mathbf{L}^{(t)}\sigma(\mathbf{H}_{1}^{(t)}))\oslash(\sigma(\mathbf{H}_{1}^{(t)})^{\top}\mathbf{D}^{(t)}\sigma(\mathbf{H}_{1}^{(t)})),
    \end{split}
    \label{eqn-pseudo-normalized-cut}
\end{equation*}
where $\sigma(\cdot)$ is the sigmoid function. We record the value of $\mathcal{L}_{pnc}^{(t)}$ for each iteration during training and select the model with the minimum $\mathcal{L}_{pnc}$ as our final result. We denote the best metric as $\mathcal{L}_{pnc}^{t^{*}}$. It is worth noting that the computing of the pseudo normalized cut $\mathcal{L}_{pnc}$ does not rely on the labels, which is suitable for the unsupervised training of GCL. 

\begin{wrapfigure}{r}{0.25\textwidth}
    \centering
    \includegraphics[width=0.25\textwidth,height=2.5cm]{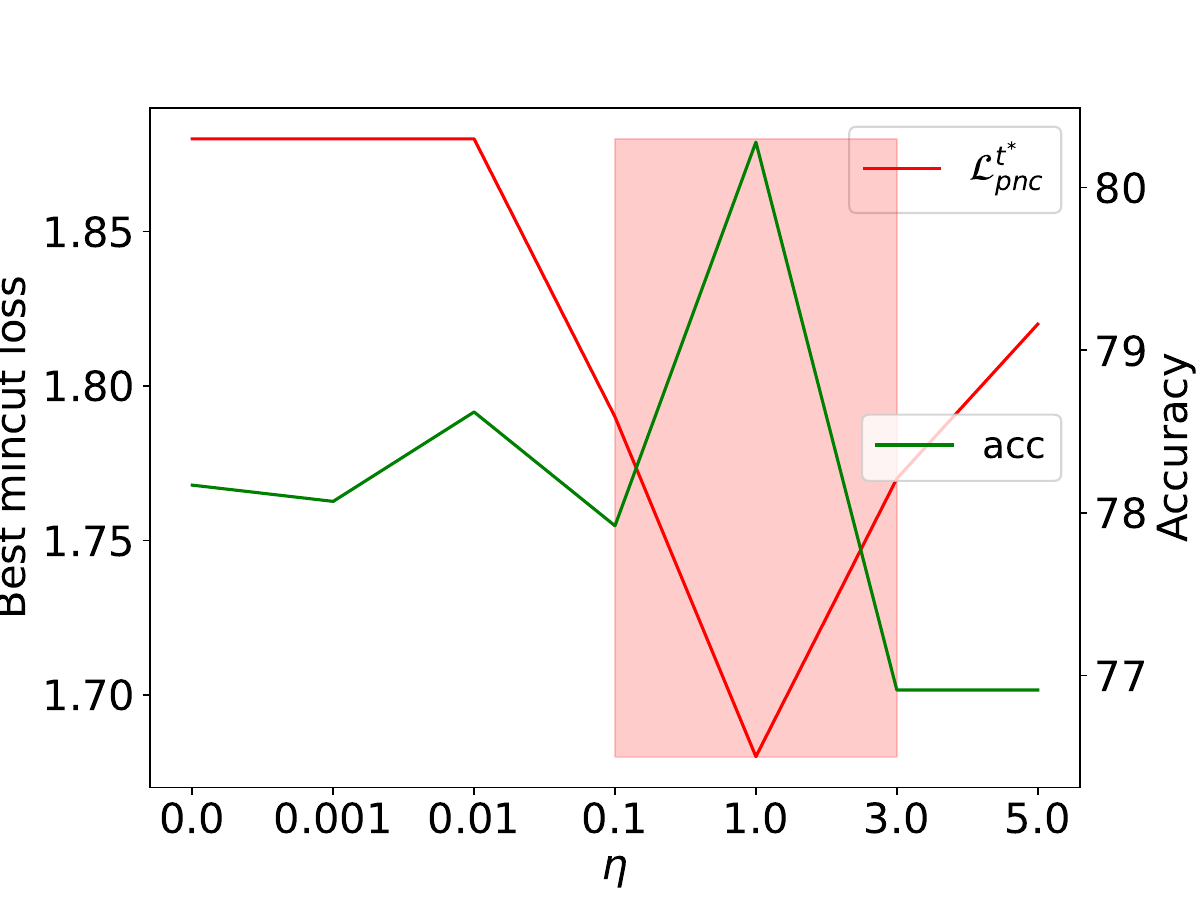}
    \caption{$\mathcal{L}_{pnc}^{t^{*}}$ v.s. Acc.}
    \label{fig-pnc-acc}
\end{wrapfigure}

Besides, we do not introduce additional parameters to ensure the time efficiency of computing $\mathcal{L}_{pnc}$.  


In order to verify the effectiveness of the crafted unsupervised tuning strategy, we present the values of $\mathcal{L}_{pnc}^{t^{*}}$ and node classification accuracy with varying $\eta$ in Fig.~\ref{fig-pnc-acc}. The results indicate that in most cases $\mathcal{L}_{pnc}^{t^{*}}$ is negatively related to the node classification accuracy, especially when the value of $\mathcal{L}_{pnc}^{t^{*}}$ falls into the neighborhood of the minimum value, i.e., $\eta\in[0.1, 3]$ (the Pearson correlation coefficients $r(\mathcal{L}_{pnc}^{t^{*}}, accuracy)=-0.89$). This phenomenon demonstrates that it is reasonable to tune $\eta$ via searching for the best $\mathcal{L}_{pnc}^{t^{*}}$, since the GCL model that achieves the minimum $\mathcal{L}_{pnc}^{t^{*}}$ is likely to get the best accuracy.

\section{Experiments}
\label{Sec-exp}
In this section, we introduce the experimental settings and conduct comprehensive experiments to validate the effectiveness and efficiency of our robust model. Specifically, we outline the questions that we seek to answer below: 
\begin{itemize}
    \item \textbf{RQ1:} Comparing with baseline models, does our model achieve better robust performances?
    \item \textbf{RQ2:} How do different components of the model contribute to its performance?
    \item \textbf{RQ3:} How do the hyperparameters in the proposed model impact the quality of the node embeddings?
\end{itemize}

\subsection{Dataset \& Settings}
\begin{table}[h]
	\centering
	\caption{Dataset statistics.}
	\label{tab-dataset}
	\resizebox{0.8\columnwidth}{!}{%
		\begin{tabular}{c|cccc}
			\toprule[1.pt]
			Datasets & \#Nodes & \#Edges & \#Classes & \#Features \\
			\hline
			Cora     & $2708$  & $5278$  & $7$   & $1433$\\ 
			Citeseer & $3327$  & $4552$  & $6$   & $3703$\\
			Cora-ML & $2995$  & $8416$ & $7$ & $2879$\\
                Photo & $7650$ & $119081$ & $8$ & $745$\\
                Computers & $13752$ & $245861$ & $10$ & $767$\\
                WikiCS & $11701$ & $215863$ & $10$ & $300$\\
			\bottomrule[1.pt]
		\end{tabular}
	}
\end{table}

We use six popular benchmark graph datasets: Cora, CiteSeer, Cora-ML, Photo, Computers and WikiCS~\cite{CitationNetwork, WikiCS, Amazon} for evaluation. To validate the quality of the node embeddings generated by GCL effectively, we train a multi-class logistic regression model with Adam optimizer~\cite{Adam} and randomly split the node labels into three groups: training, validation and testing set with the ratio as $10\%$, $10\%$ and $80\%$. 
We run all the models $10$ times with different seeds and report the mean value for a fair comparison. The vital hyperparameter $\eta$ is tuned based on the unsupervised tuning strategy without querying the node labels. 
\begin{table*}[h]
\centering
    \caption{Performances on node classification against Mettack.}
	\label{tab-defend-Mettack}
	\resizebox{0.9\textwidth}{!}{%
\begin{tabular}{c|ccccccccccc}
\toprule[1.pt]
 Dataset & $\frac{B}{|E|}$ & PiGCL & BGRL       & DGI        & MVGRL               & GRACE       & ARIEL       & GCA         & SPAN   & SPAGCL & \textbf{GCIR}                 \\ \hline
\multirow{4}*{Cora}&5\% & 75.25 (0.9) & 66.97(1.4) & 73.41(1.0) & 75.73(0.5) & 72.46(1.8) & 74.92(1.2) & 73.80(1.1) & 74.18(0.9) & 76.26 (0.8) & \textbf{78.78(1.0)} \\
                  &10\% & 70.27 (1.3) & 60.11(1.7) & 68.03(1.8) & 71.10(1.4) & 65.37(2.8) & 71.93(1.2) & 70.26(1.3) & 68.81(1.9) & 71.33 (0.5) & \textbf{75.93(0.9)} \\
                  &15\% & 65.94 (0.6) & 51.85(3.0) & 59.70(2.7) & 64.69(1.9) & 52.62(3.1) & 52.91(4.4) & 57.89(1.5) & 58.44(3.2) & 63.83 (0.7) & \textbf{71.89(1.6)} \\
                  &20\% & 56.54 (1.8) & 44.18(4.3) & 51.43(2.3) & 57.29(2.8) & 41.17(4.0) & 43.71(3.9) & 46.03(3.0) & 45.71(2.5) & 50.96 (1.0) & \textbf{68.81(2.4)} \\
\hline
\multirow{4}*{CiteSeer}
                       &5\%  & 69.60 (1.3) & 56.90(2.7) & 66.43(1.1) & 68.93(0.9) & 69.55(2.0) & 70.79(1.7) & 67.93(1.1) & 69.08(1.3) & \textbf{71.21 (1.2)} & 70.93(1.3) \\
                       &10\% & 65.40 (0.9) & 52.30(2.6) & 60.78(2.8) & 65.01(1.2) & 63.49(2.3) & 69.14(2.1) & 60.82(1.3) & 65.18(2.6) & \textbf{69.60 (1.4)} & 69.43(1.3) \\
                       &15\% & 59.42 (2.1) & 44.44(2.7) & 53.30(2.4) & 58.51(1.3) & 53.67(2.7) & 56.46(2.3) & 52.52(1.7) & 56.08(3.5) & 62.44 (0.9) & \textbf{65.84(1.2)} \\
                       &20\% & 51.78 (0.8) & 40.89(3.2) & 49.54(1.9) & 51.69(1.6) & 47.62(3.0) & 47.93(3.2) & 48.79(1.4) & 50.68(3.0) & 54.56 (0.6) & \textbf{62.65(1.5)} \\
\hline
\multirow{4}*{Cora-ML}
                      &5\%  & 68.59 (0.8) & 64.26(1.9) & 67.06(2.9) & 70.18(2.5) & 68.65(2.0) & 71.21(1.7) & 69.84(1.7) & 69.34(0.9) & 72.33 (1.1) & \textbf{77.90(0.8)} \\
                      &10\% & 49.78 (1.8) & 53.08(2.6) & 49.34(2.1) & 54.20(2.4) & 50.49(1.7) & 51.87(1.8) & 46.81(1.9) & 50.44(1.2) & 55.60 (0.9) & \textbf{71.57(1.7)} \\
                      &15\% & 39.19 (1.3) & 44.09(3.6) & 42.21(1.5) & 45.96(3.3) & 40.89(3.4) & 41.11(2.3) & 37.17(3.6) & 41.77(2.4) & 47.24 (1.3) & \textbf{64.56(1.8)} \\
                      &20\% & 29.45 (0.89) & 37.41(3.3) & 34.35(0.4) & 37.40(2.3) & 31.57(3.3) & 28.52(2.2) & 26.69(2.1) & 33.46(2.5) & 34.79 (1.3) & \textbf{56.43(2.5)} \\
\hline
\multirow{4}*{Photo}
                    &5\%  & 70.12 (1.7) & 66.50(2.8) & 62.24(3.4) & 73.14(1.3) & 70.87(1.1) & 69.85(2.4) & 76.45(2.1) & 72.46(2.3) & 70.07 (2.0) & \textbf{78.37(2.2)} \\
                    &10\% & 50.75 (2.3) & 52.62(3.9) & 47.76(3.7) & 58.63(3.3) & 51.39(1.5) & 53.92(1.8) & 60.57(2.0) & 63.46(0.4) & 56.98 (1.5) & \textbf{67.80(2.4)} \\
                    &15\% & 43.70 (1.) & 43.56(2.4) & 40.73(3.6) & 49.82(2.0) & 44.37(1.6) & 45.98(1.8) & 51.66(1.4) & 50.02(0.4) & 49.87 (0.9) & \textbf{59.19(1.5)} \\
                    &20\% & 27.35 (1.5) & 35.68(3.5) & 37.91(4.0) & 45.00(2.0) & 32.45(1.4) & 34.19(1.8) & 45.35(1.3) & 46.81(3.4) & 35.03 (1.3) & \textbf{54.68(2.9)} \\
\hline
\multirow{4}*{Computers}
                        &5\%  & 62.78 (1.5) & 63.12(1.3) & 63.91(1.9) & 69.16(0.6) & 66.22(1.0) & 67.51(1.7) & 68.02(0.9) & 64.06(1.4) & 64.85 (1.1) & \textbf{72.25(2.2)} \\
                        &10\% & 59.70 (2.5) & 61.33(2.1) & 57.06(1.9) & 64.58(0.4) & 58.07(2.0) & 57.04(1.5) & 58.90(2.1) & 54.81(1.3) & 57.68 (1.1) & \textbf{66.14(2.1)} \\
                        &15\% & 54.26 (0.7) & 52.59(3.4) & 51.40(2.0) & 54.99(2.6) & 52.58(2.1) & 43.65(2.6) & 52.90(2.6) & 45.63(3.2) & 54.82 (0.8) & \textbf{59.93(3.8)} \\
                        &20\% & 49.89 (1.3) & 37.61(2.3) & 41.54(4.9) & 44.98(3.5) & 37.21(3.9) & 34.71(1.7) & 38.81(3.6) & 37.39(2.1) & 42.87 (0.6) & \textbf{51.13(4.1)} \\
\hline
\multirow{4}*{WikiCS}
                     &5\%  & 41.58 (3.12) & 40.91(1.0) & 43.71(2.2) & 42.17(1.8) & 41.30(1.1) & 43.92(1.0) & 41.67(0.8) & 40.60(1.1) & 43.70 (1.3) & \textbf{55.93(0.8)} \\
                     &10\% & 32.27 (1.87) & 29.93(0.9) & 33.88(2.7) & 30.73(0.7) & 30.89(1.1) & 33.55(1.4) & 31.18(0.7) & 28.48(0.7) & 30.56 (1.2) & \textbf{49.30(0.6)} \\
                     &15\% & 25.68 (2.2) & 24.35(0.7) & 27.10(2.0) & 24.54(1.1) & 25.40(0.4) & 27.37(0.6) & 25.25(0.5) & 23.32(0.7) & 27.32 (1.5) & \textbf{42.96(1.1)} \\
                     &20\% & 23.83 (1.84) & 21.60(1.8) & 23.79(1.3) & 22.38(0.8) & 23.23(0.4) & 24.56(0.7) & 23.03(0.7) & 21.79(0.5) & 23.94 (0.9) & \textbf{36.47(1.0)} \\
\bottomrule[1.pt]
\end{tabular}}
\end{table*}

\begin{table*}[h]
\centering
    \caption{Performances on node classification against CLGA.}
	\label{tab-defend-CLGA}
	\resizebox{0.9\textwidth}{!}{%
\begin{tabular}{c|ccccccccccc}
\toprule[1.pt]
 Dataset & $\frac{B}{|E|}$ & PiGCL & BGRL       & DGI        & MVGRL               & GRACE       & ARIEL       & GCA         & SPAN   & SPAGCL     & \textbf{GCIR}                 \\ \hline
\multirow{4}*{Cora}
&5\%      & 79.72 (0.9) & 70.78(1.1) & 78.96(1.3) & \textbf{80.09(0.9)} & 78.95(0.9) & 78.47(0.7) & 78.55(0.7) & 78.96(0.7) & 77.46 (1.0) & 79.13(0.6)           \\
&10\%     & 78.11 (1.2) & 68.35(1.9) & 75.64(1.2) & 77.92(0.9)          & 76.79(1.4) & 76.31(0.9) & 77.15(0.8) & 77.22(1.1) & 75.93 (0.9) & \textbf{78.62(1.1))} \\
&15\%     & 77.01 (2.3) & 66.15(1.3) & 73.72(0.7) & 75.89(1.0)          & 75.38(1.1) & 74.75(0.9) & 74.71(1.2) & 76.11(1.2) & 74.65 (1.6) & \textbf{78.18(1.0)}  \\
&20\%     & 75.20 (1.8) & 64.44(1.8) & 71.94(1.7) & 74.64(0.8)          & 73.13(1.1) & 74.44(1.2) & 71.78(1.0) & 74.43(1.6) & 73.74 (1.4) & \textbf{76.82(1.2)} \\
\hline
\multirow{4}*{CiteSeer}
&5\%      & 68.95 (2.3) & 58.52(1.3) & 66.24(1.5) & 68.60(1.2) & 69.38(1.2) & 70.43(1.2) & 67.92(1.0)          & 70.38(1.1) & 
70.14 (1.4) & \textbf{71.18(1.2)} \\
&10\%     & 68.48 (1.8) & 56.27(1.3) & 65.62(1.5) & 67.65(1.0)          & 67.80(1.0) & 69.25(2.1) & 65.43(1.1)          & 68.74(1.3) & 
68.36 (0.9) & \textbf{70.11(0.9)} \\
&15\%     & 67.30 (1.5) & 54.09(2.0) & 63.75(1.6) & 66.29(0.8)          & 67.52(1.6) & 66.77(2.4) & 62.43(1.7)          & 67.71(1.3) & 
67.18 (1.0) & \textbf{69.58(1.5)} \\
&20\%     & 64.10 (1.8) & 51.24(1.9) & 62.55(1.5) & 64.24(1.3)          & 65.68(1.7) & 64.99(2.1) & 60.52(1.7)          & 66.62(1.4) & 
66.58 (1.1) & \textbf{67.90(1.1)} \\
\hline
\multirow{4}*{Cora-ML}
&5\%      & 78.38 (1.2) & 70.38(2.1) & 77.91(0.9) & 78.62(0.8) & 79.16(1.0) & 79.27(0.6) & 77.53(0.7)          & 79.36(1.0) & 
79.18 (1.3) & \textbf{80.02(0.8)} \\
&10\%     & 77.89 (0.7) & 68.75(1.7) & 76.40(1.1) & 76.29(0.9)          & 77.34(1.0) & 77.87(0.7) & 76.18(0.9)          & 77.96(0.8) & 
77.62 (1.3) & \textbf{78.68(0.9)} \\
&15\%     & 76.77 (1.6) & 67.38(1.1) & 75.46(1.4) & 74.61(0.6)          & 76.54(0.8) & 76.97(0.9) & 74.09(1.2)          & 77.10(0.6) & 
76.38 (0.8) & \textbf{77.34(0.9)} \\
&20\%     & 75.71 (1.2) & 65.09(2.0) & 73.42(1.7) & 73.40(0.8)          & 75.36(0.8) & 75.36(1.3) & 72.54(0.9)          & 75.99(0.5) & 
75.22 (1.5) & \textbf{76.72(0.8)} \\
\hline
\multirow{4}*{Photo}
&5\%      & 77.78 (1.2) & 80.79(1.4) & 75.80(2.8) & 84.86(0.7) & 81.91(0.8) & 82.28(1.0) & 82.25(1.1) & 84.18(0.8) & 83.62 (0.6) & \textbf{87.09(0.6)} \\
&10\%     & 75.33 (0.9) & 79.11(1.0) & 72.74(3.4) & 82.27(0.9) & 77.49(1.0) & 78.12(0.9) & 79.17(0.8) & 82.21(0.7) & 81.86 (0.9) & \textbf{85.04(0.5)} \\
&15\%     & 70.97 (1.8) & 77.83(0.9) & 68.99(0.5) & 80.39(0.7) & 75.28(1.0) & 76.98(0.6) & 76.68(0.8) & 80.78(0.5) & 80.83 (1.1) & \textbf{82.11(1.1)} \\
&20\%     & 67.15 (1.6) & 76.97(0.8) & 68.59(2.0) & 78.88(0.8) & 71.85(1.2) & 74.92(1.3) & 73.10(0.7) & 78.90(0.9) & 77.66 (1.1) & \textbf{79.21(2.2)} \\
\bottomrule[1.pt]
\end{tabular}}
\end{table*}

\begin{figure*}[h]
	\begin{center}
        \subfloat[CLGA-0.05]{\includegraphics[width=0.24\textwidth,height=3.cm]{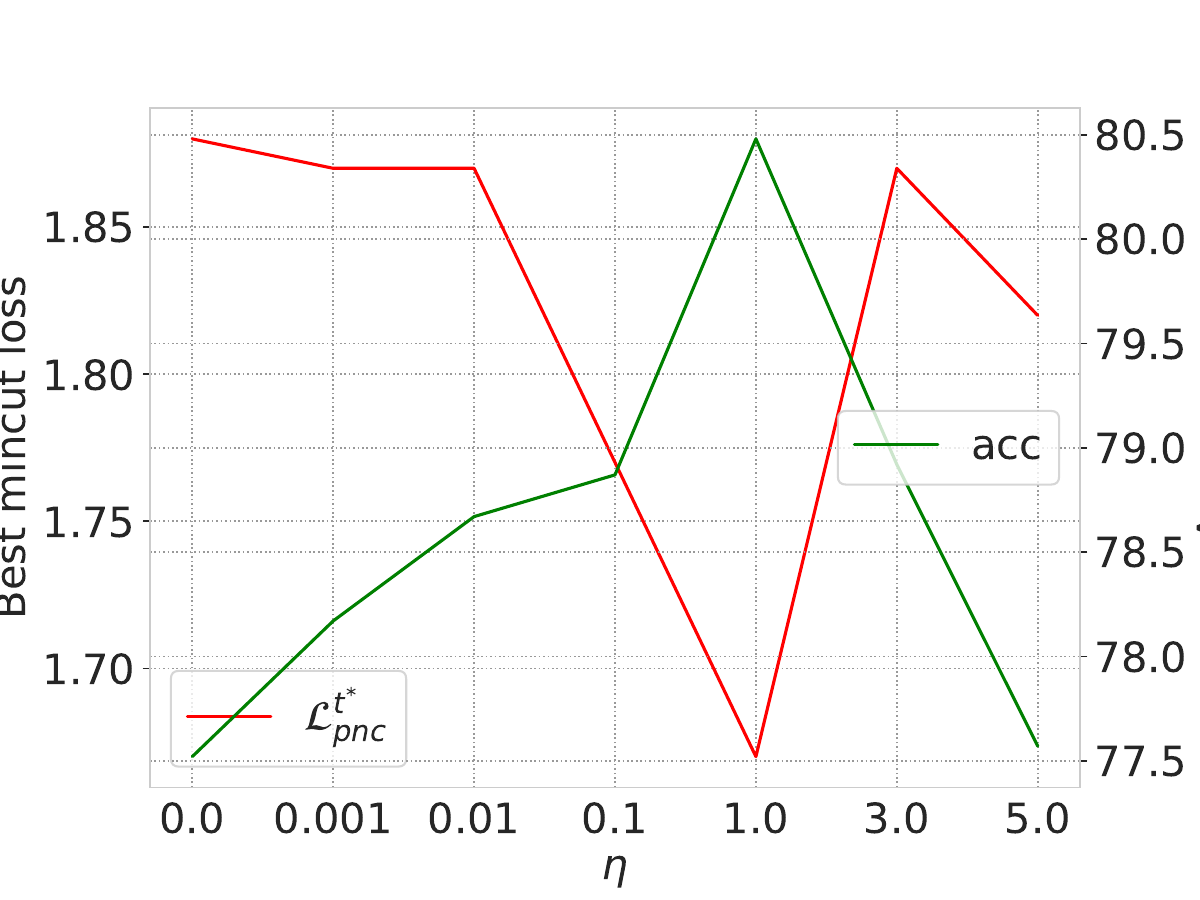}}
        \hfill
		\subfloat[CLGA-0.1]{\includegraphics[width=0.24\textwidth,height=3.cm]{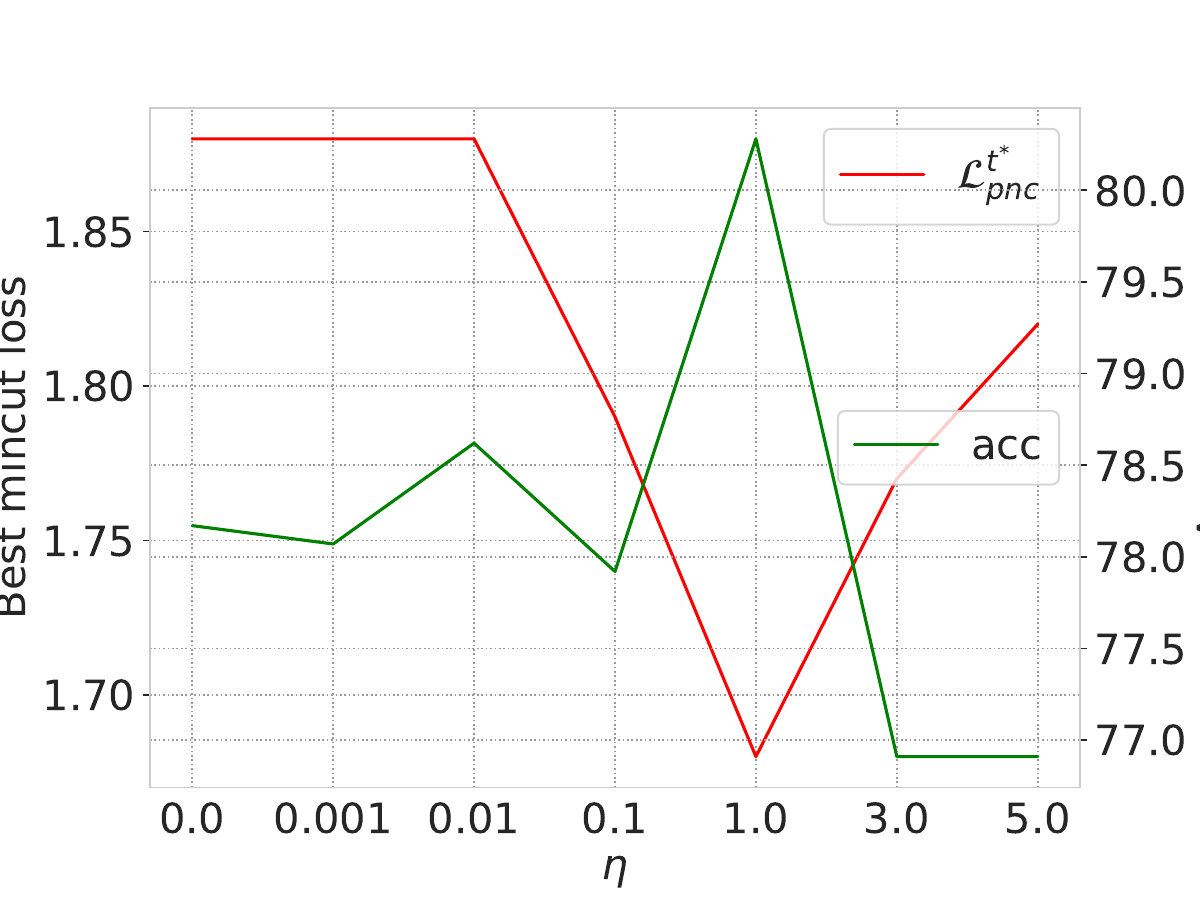}}
        \hfill
        \subfloat[CLGA-0.15]{\includegraphics[width=0.24\textwidth,height=3.cm]{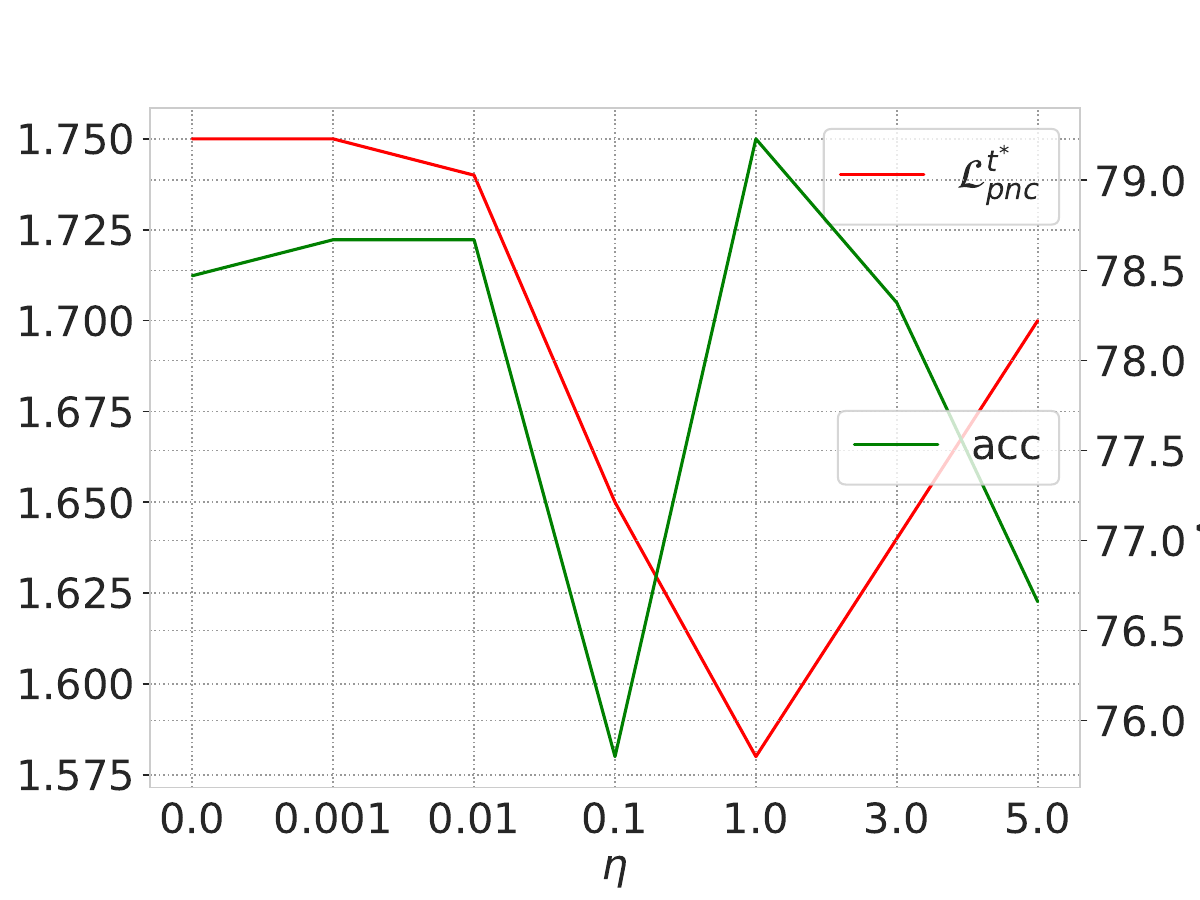}}
        \hfill
        \subfloat[CLGA-0.2]{\includegraphics[width=0.24\textwidth,height=3.cm]{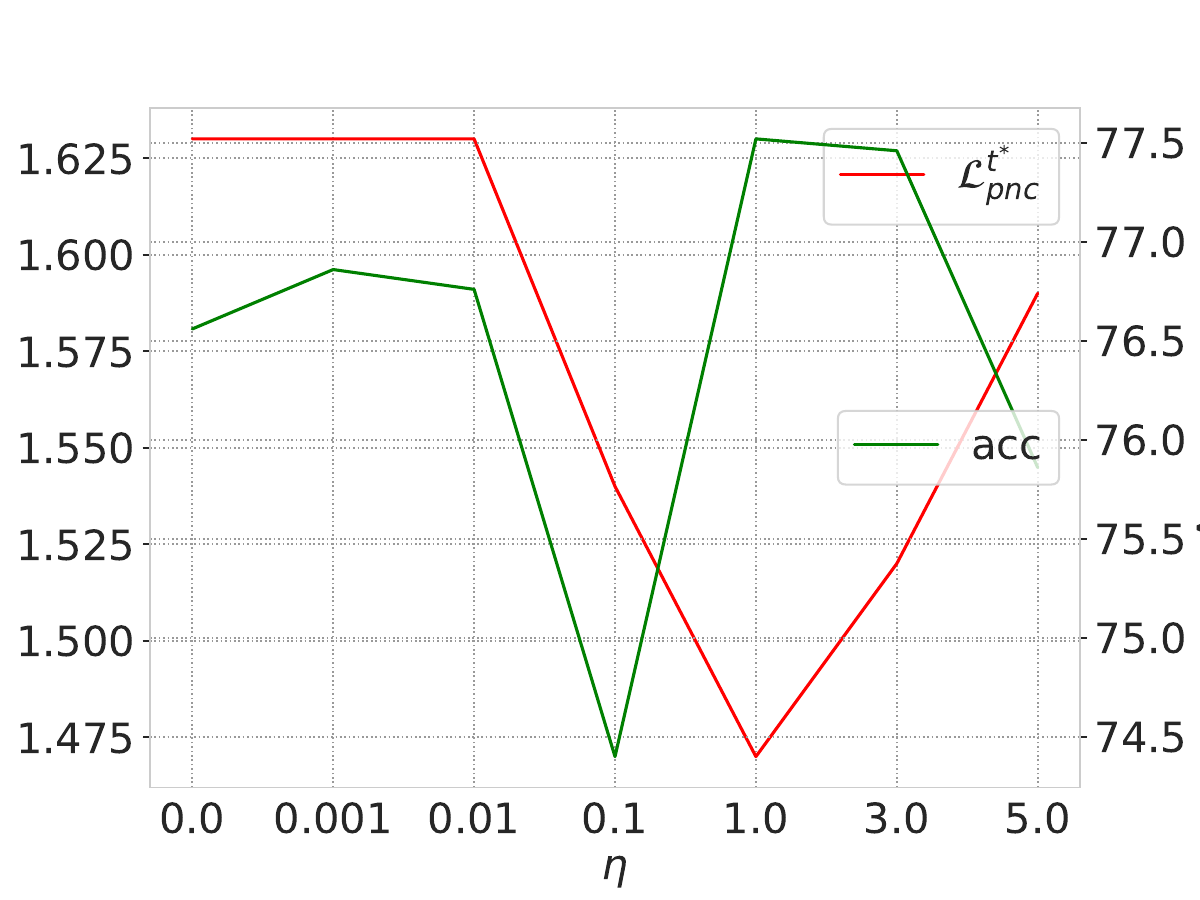}}
        \hfill
        \subfloat[Mettack-0.05]{\includegraphics[width=0.24\textwidth,height=3.cm]{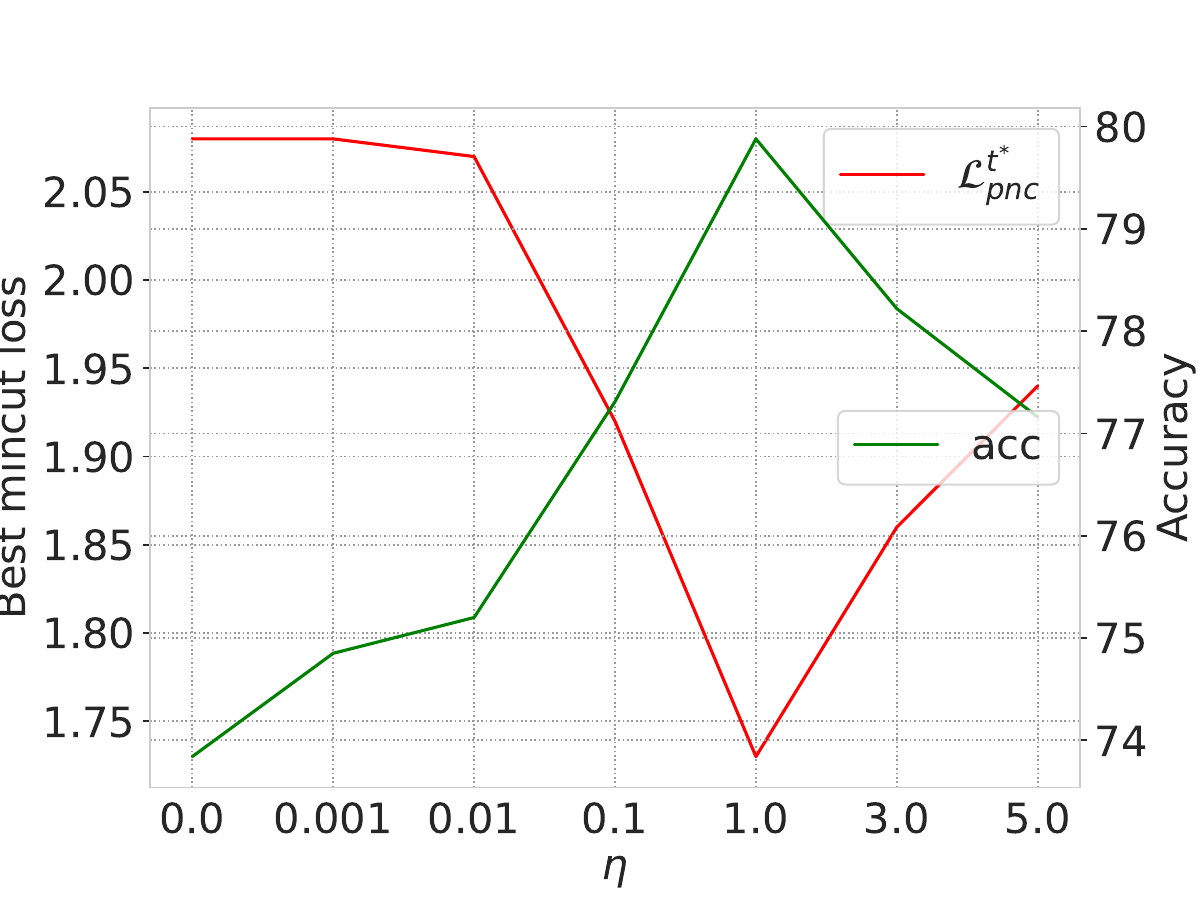}}
        \hfill
        \subfloat[Mettack-0.1]{\includegraphics[width=0.24\textwidth,height=3.cm]{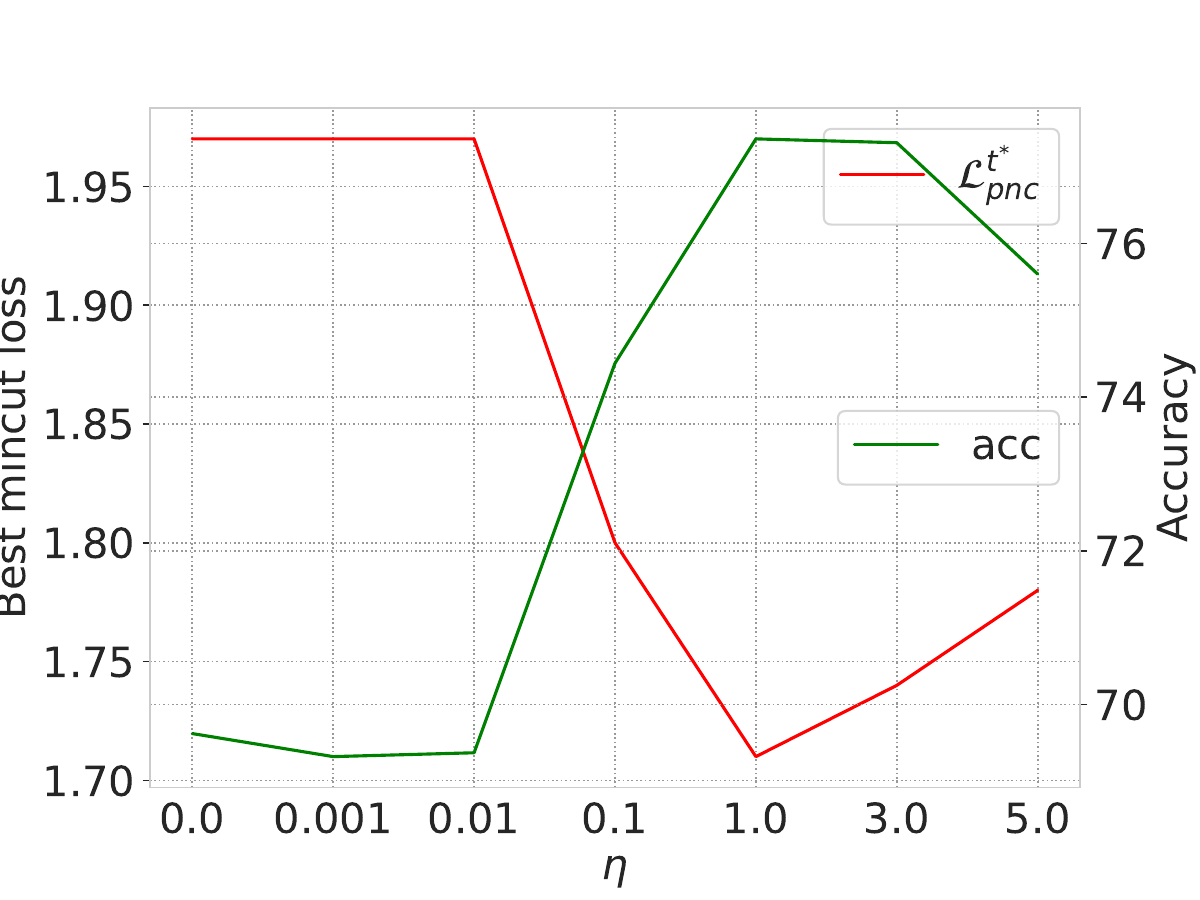}}
        \hfill
        \subfloat[Mettack-0.15]{\includegraphics[width=0.24\textwidth,height=3.cm]{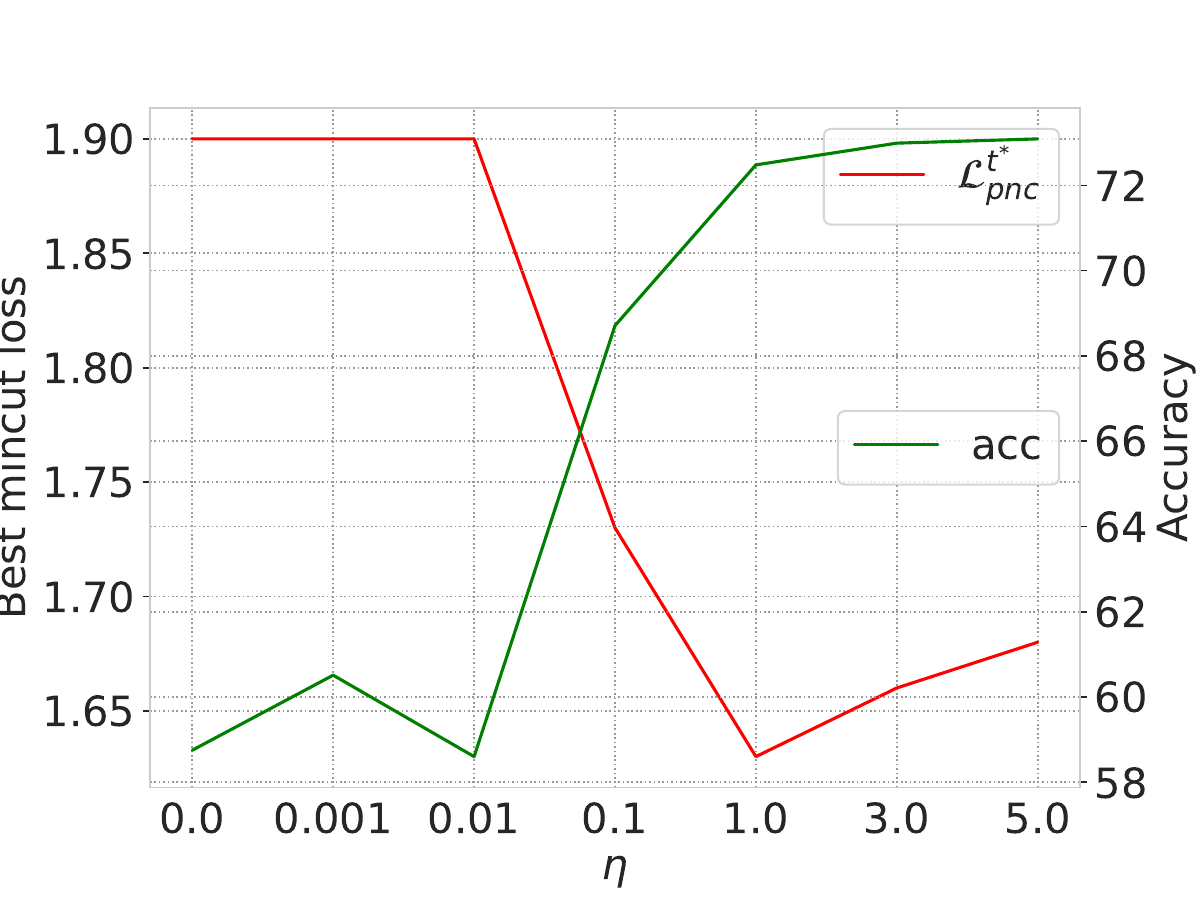}}
        \hfill
        \subfloat[Mettack-0.2]{\includegraphics[width=0.24\textwidth,height=3.cm]{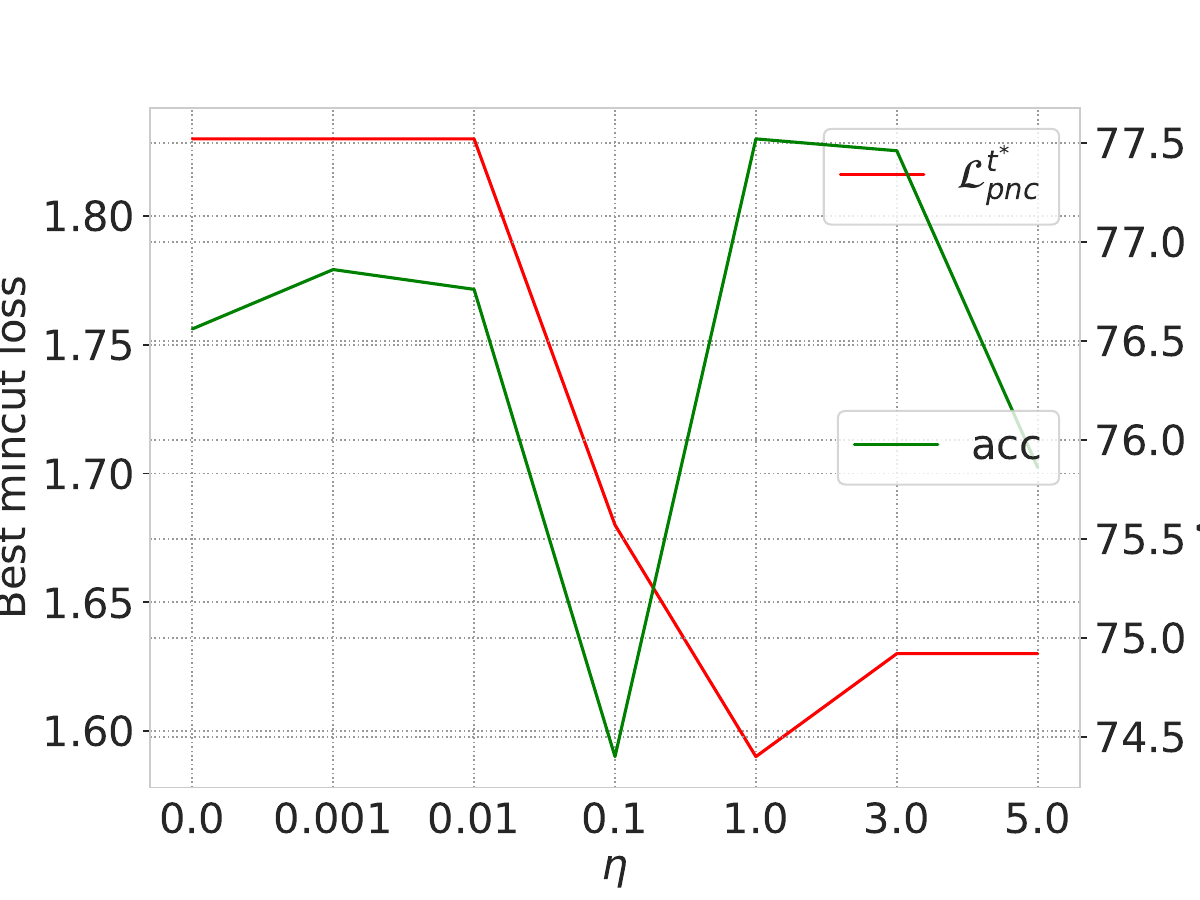}}
	\end{center}
	\caption{Sensitivity analysis on $\eta$.}
	\label{fig-Sensitivity}
\end{figure*}

\subsection{Baselines}
\label{Sec-Baselines}
To thoroughly validate the robustness of our proposed method, we have conducted a comparative analysis with several state-of-the-art graph contrastive learning models: BGRL~\cite{BGRL}, DGI~\cite{DGI}, MVGRL~\cite{MVGRL}, GRACE~\cite{GRACE}, GCA~\cite{GCA}, ARIEL~\cite{ARIEL}, and SPAN~\cite{SPAN}. Among them, ARIEL and SPAN are known for their robustness in GCL. Meanwhile, it is worth noting that the defense methods proposed in \cite{GRV, LocalGlobal} require access to the \textit{clean} graph, which is not available in a more practical setting such as ours, ARIEL and SPAN. We list the brief introduction of these baselines as follows:
\begin{itemize}
    \item \textbf{BGRL}~\cite{BGRL} It introduces the Bootstrapped Graph Latents technique to improve the scalability of graph contrastive learning effectively. It contrasts the two embeddings from distinct encoders rather than augmentation views, in which one encoder is the exponential moving average of the other. 
    \item \textbf{DGI}~\cite{DGI} It relies on maximizing the mutual information between patch representations and corresponding high-level summaries of graphs, such as the readout mapping of the node embeddings. 
    \item \textbf{MVGRL}~\cite{MVGRL} It learns node and graph level representations by contrasting low-dimensional embeddings from two augmented views of graphs, including first-order neighbors and a graph diffusion. 
    \item \textbf{GRACE}~\cite{GRACE} It is an unsupervised graph representation learning that generates two augmentation views by corruption and learns node representations by maximizing the agreement of node representations in these two views.
    \item \textbf{ARIEL}~\cite{ARIEL} it leverages the adversarial training strategy to construct the adversarial view with information regularization to achieve adversarial robustness. 
    \item \textbf{GCA}~\cite{GCA} It is a graph contrastive learning framework that employs more complex adaptive augmentation schemes such as degree centrality, eigenvector centrality, and PageRank centrality.
    \item \textbf{SPAN}~\cite{SPAN} It introduces two augmentation views to minimize and maximize the spectrum of the augmented graphs to preserve the spectrum invariance of the GCL.
    \item \textbf{PiGCL}~\cite{PiGCL} It is a graph contrastive learning framework that dynamically captures the implicit conflicts from the negative pairs during training by detecting the gradient of representation similarities to eliminate the conflict phenomenon caused by the InfoNCE loss.
    \item \textbf{SPAGCL}~\cite{SPAGCL} It is a graph contrastive learning framework that combines the node similarity-preserving view and an adversarial view to improve the node similarity during the adversarial training.
\end{itemize}


\subsection{Robustness on Node Classification}
\label{Sec-Robust-NC}
The objective of this paper is to examine the adversarial robustness of our proposed model against two common graph structural attacks: Mettack~\cite{Mettack} (a graph structural attack with GNN as its surrogate model) and CLGA~\cite{CLGA} (a graph structural attack with GCL as its surrogate model). 
\subsubsection{Against Mettack}
Tab.~\ref{tab-defend-Mettack} displays the accuracies of various GCL models for the semi-supervised node classification under different attacking powers. In particular, the attacking power is defined as the percentage $\frac{B}{|E|}$, where $B$ is the attack budget (number of modified edges) and $|E|$ is the total edge number of the clean graph. To cover most attacking scenarios, we have chosen the attacking power from $\{5\%, 10\%, 15\%, 20\%\}$, which is consistent with the prior literature~\cite{Mettack, CLGA}. 

The results in Tab.~\ref{tab-defend-Mettack} indicate that Mettack affects all GCL methods, although it was originally designed to attack vanilla GNNs. This is because the GCL models also rely on the GNN encoder to convert the graph data into low-dimensional embeddings, and thus, Mettack can indirectly compromise the quality of the GCL's node embeddings.

The second observation is that our proposed model outperforms the other baselines across various levels of attacking power. Moreover, the performance gap between our model and other baselines widens as the attacking power grows from $5\%$ to $20\%$. For example, on the Cora dataset, the performance gaps between our model with GRACE~\cite{GRACE}, ARIEL~\cite{ARIEL}, GCA~\cite{GCA}, SPAN~\cite{SPAN}, BGRL~\cite{BGRL}, DGI~\cite{DGI} and MVGRL~\cite{MVGRL} are $8.02\%$, $4.90\%$, $6.32\%$, $5.84\%$, $14.99\%$, $6.82\%$ and $3.87\%$ when the attacking power is $5\%$. The corresponding margins under $20\%$ attacking power are $40.17\%$, $36.47\%$, $33.11\%$, $33.57\%$, $35.79\%$, $25.26\%$, $16.74\%$. This can be attributed to the fact that as the attacking power grows, more malicious links are inserted. However, the sanitation view in our proposed model can effectively eliminate a greater number of malicious edges from the heavily poisoned graph, thereby improving the quality of node embeddings and making them more distinguishable across different classes. 

\subsubsection{Against CLGA}
Tab.~\ref{tab-defend-CLGA} presents the robustness results under the unsupervised attack CLGA. Similar to its performance under Mettack, our model outperforms other baselines across various levels of attacking power. However, since CLGA is an unsupervised attack, the degree of decreasing node classification accuracy for GCL models is not as significant as in Mettack, which specifically targets the test set. 
We note that MVGRL performs well against CLGA and even outperforms our method when the attacking power is equal to $5\%$ on the Cora dataset. This is because  MVGRL does not use the InfoNCE loss for training, which is the attack objective of CLGA. Nevertheless, our model outperforms MVGRL on the Cora dataset when the attacking power is larger than or equal to $10\%$. Overall, the proposed learnable sanitation view effectively promotes the robustness of the vanilla GCL framework by automatically detecting and pruning potential malicious edges in the poisoned graph, as demonstrated by the robust performance results shown in Tab.~\ref{tab-defend-Mettack} and Tab.~\ref{tab-defend-CLGA}.

\begin{figure}[h]
	\begin{center}
		\subfloat[Mettack]{\includegraphics[width=0.24\textwidth,height=3.cm]{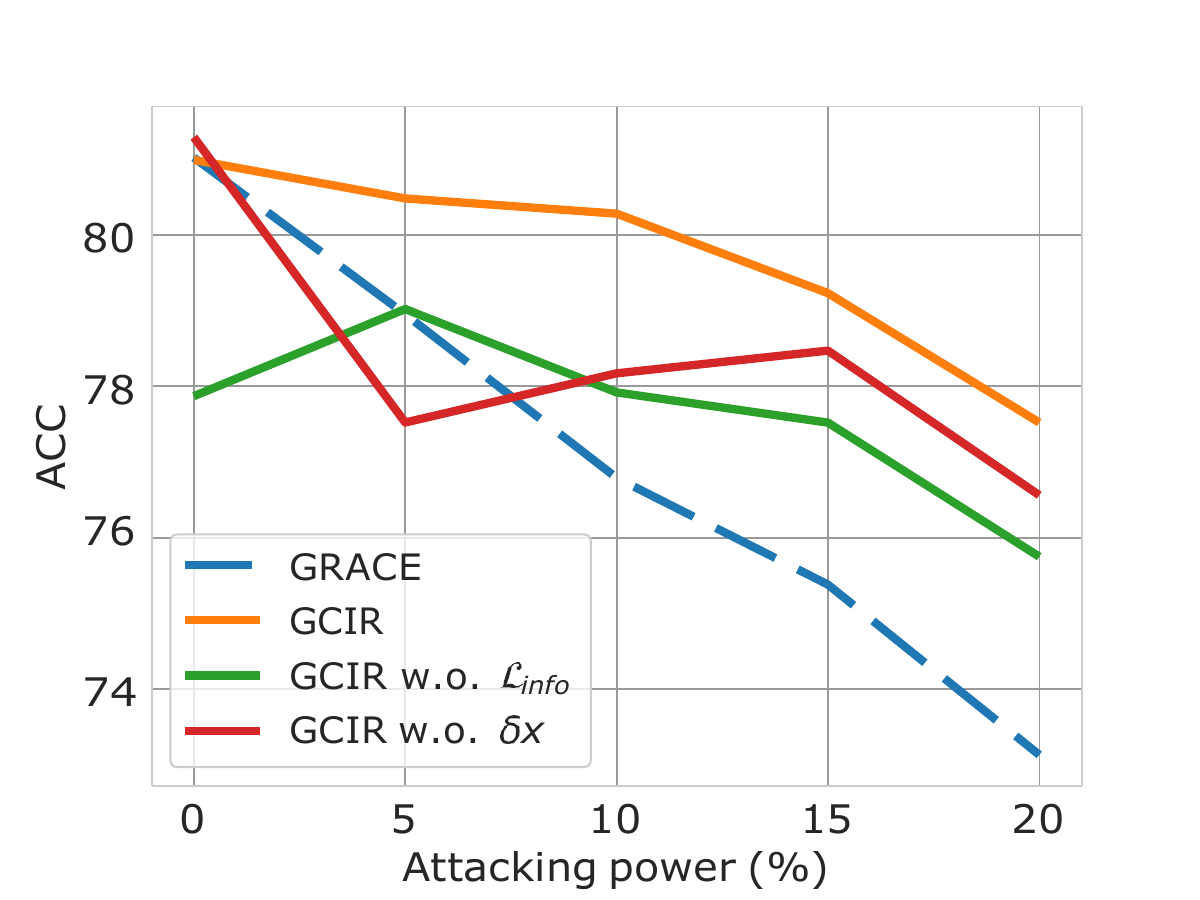}}
        \hfill		
        \subfloat[CLGA]{\includegraphics[width=0.24\textwidth,height=3.cm]{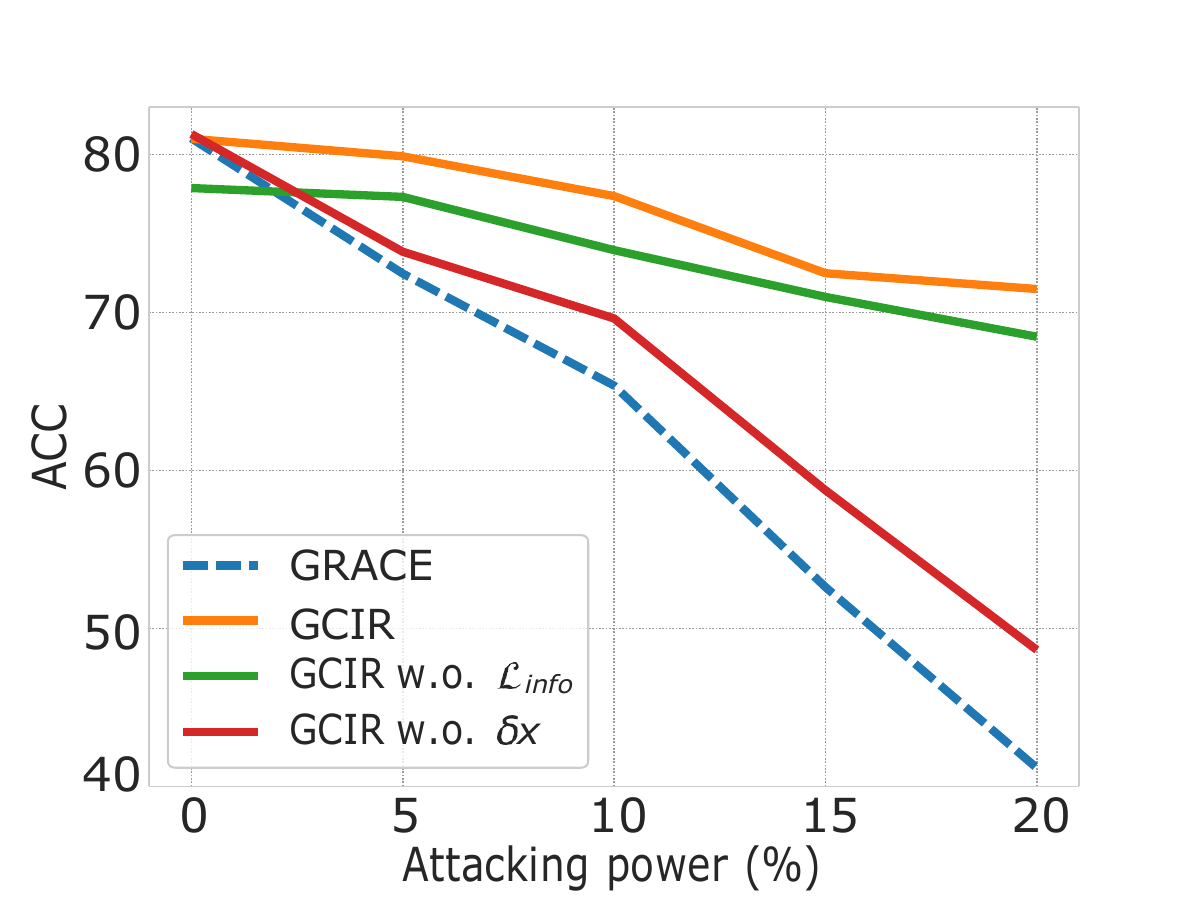}}
	\end{center}
	\caption{Ablation study of two model variants.}
	\label{fig-Ablation}
\end{figure}

\begin{figure}[h]
	\begin{center}
		\subfloat[CLGA]{\includegraphics[width=0.24\textwidth,height=3.cm]{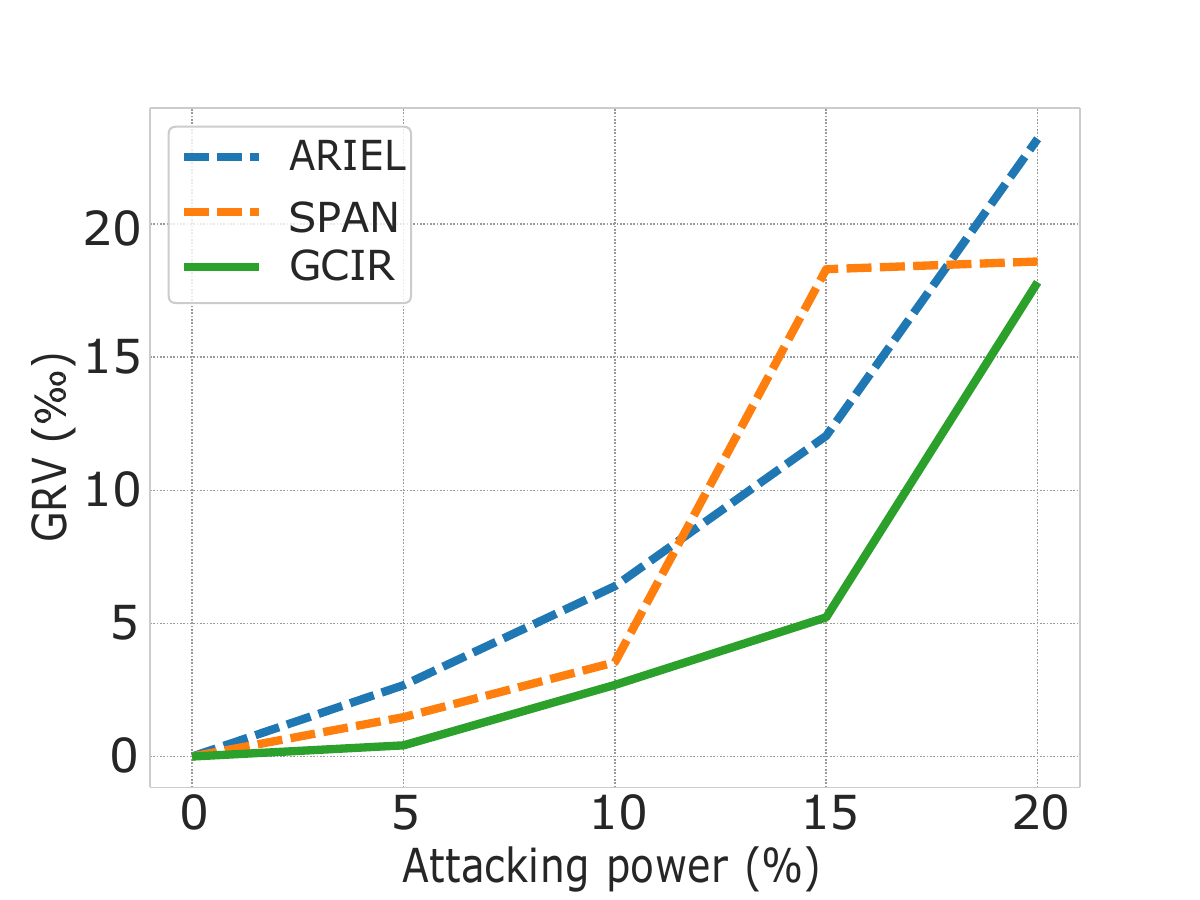}}
        \hfill		
        \subfloat[Mettack]{\includegraphics[width=0.24\textwidth,height=3.cm]{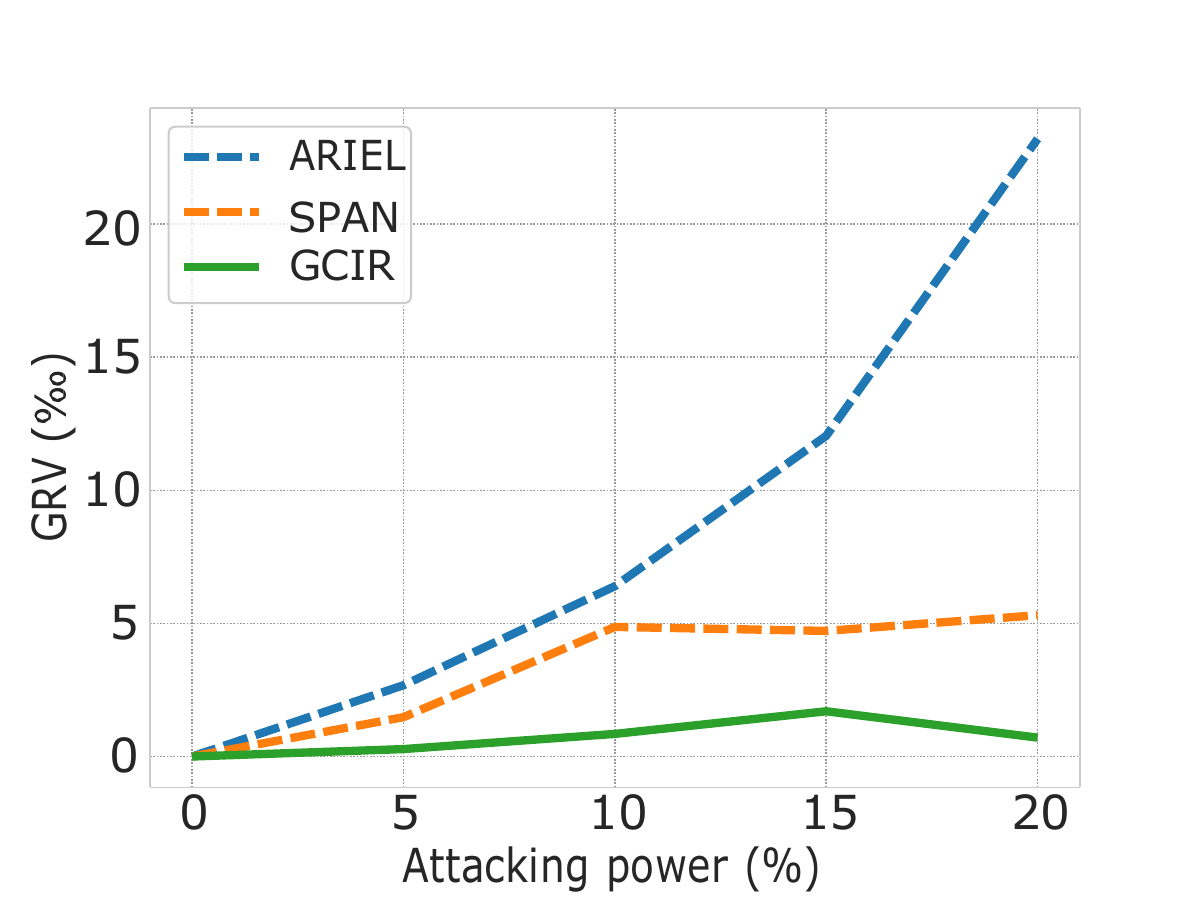}}
	\end{center}
	\caption{Evaluation of the adversarial robustness of robust GCLs based on GRV for different attacking scenarios.}
	\label{fig-GRV}
\end{figure}

\subsection{Ablation Study}
In this section, we present the ablation study to verify the effectiveness of two vital components of the loss function defined in Eqn.~\ref{eqn-san-obj-2}, corresponding to restoring the mutual information between the graph and its representations and the graph homophily respectively. In Particular, we introduce two model variants as follows:
\begin{itemize}
    \item \textbf{GCIR} w.o. $\mathcal{L}_{info}$: we remove the InfoNCE loss in Eqn.~\ref{eqn-san-obj-2} and first train the sanitation view to optimize $\delta_{x}$. Then, we train the GCL model with the pre-trained sanitation view. 
    \item \textbf{GCIR} w.o. $\delta_{x}$: we remove the $\delta_{x}$ term in Eqn.~\ref{eqn-san-obj-2} and only optimize the InfoNCE loss in \textbf{GCIR}.
\end{itemize}
The results are presented in Fig.~\ref{fig-Ablation}. Firstly, comparing the robust performance of \textbf{GCIR} w.o. $\mathcal{L}_{info}$, \textbf{GCIR} w.o. $\delta_{x}$ and GRACE, we find out that both optimizing the InfoNCE loss and the graph homophily $\delta_{x}$ can enhance the robustness of the GCL model since in most cases the two model variants outperform GRACE. Secondly, the comparison of \textbf{GCIR} w.o. $\mathcal{L}_{info}$ and \textbf{GCIR} w.o. $\delta_{x}$ under CLGA reveals that restoring the diminished mutual information will be more effective than the graph homophily. Thirdly,  the comparison of \textbf{GCIR} w.o. $\mathcal{L}_{info}$ and \textbf{GCIR} w.o. $\delta_{x}$ under Mettack indicates that restoring the graph homophily will be more effective than the mutual information. The reason is probably that Mettack is such attacks that degenerate the GNN's performance, which does not directly rely on information theory.  

\subsection{Sensitivity Analysis}
In this section, we discuss the tuning strategy of the vital hyperparameter $\eta$ in Eqn.~\ref{eqn-san-obj-2}. To mimic the real environment, it is assumed that the defender cannot acquire the attacking scenarios (such as the attack type, attack degree, and victim nodes or edges and etc.), and the label information of the given graph. That is, the defender can only utilize the attribute matrix, the suspected perturbed adjacency matrix, and the node embeddings of GCL to determine the best choice of $\eta$. To tackle this issue, as previously mentioned in Sec.~\ref{sec-unsupervised-tuning}, we craft a pseudo normalized cut loss $\mathcal{L}_{pnc}^{*}$ to tune the vital hyperparameter $\eta$.
To be specific, we report the best pseudo normalized cut loss $\mathcal{L}_{pnc}^{t^{*}}$ and the node classification accuracy with attack power equal to $10\%$ for CLGA and Mettack as exemplar and present the results in Fig.~\ref{fig-Sensitivity}. Moreover, we tune the hyperparameter $\eta$ ranged in $[0,0.0001,0.001,0.01,0.1,1,3,5].$ The experiments demonstrate that too large and too small $\eta$ may lead to sub-optimal performance on \textbf{GCIR} under different attacking scenarios. Alternatively, for all the cases listed in Fig.~\ref{fig-Sensitivity} $\eta=1$ can achieve the minimum $\mathcal{L}_{pnc}^{t^{*}}$ (coincide with the best node classification accuracy). 

\begin{figure}[h]
	\begin{center}
		\subfloat[GCIR-CLGA]{\includegraphics[width=0.24\textwidth,height=3.cm]{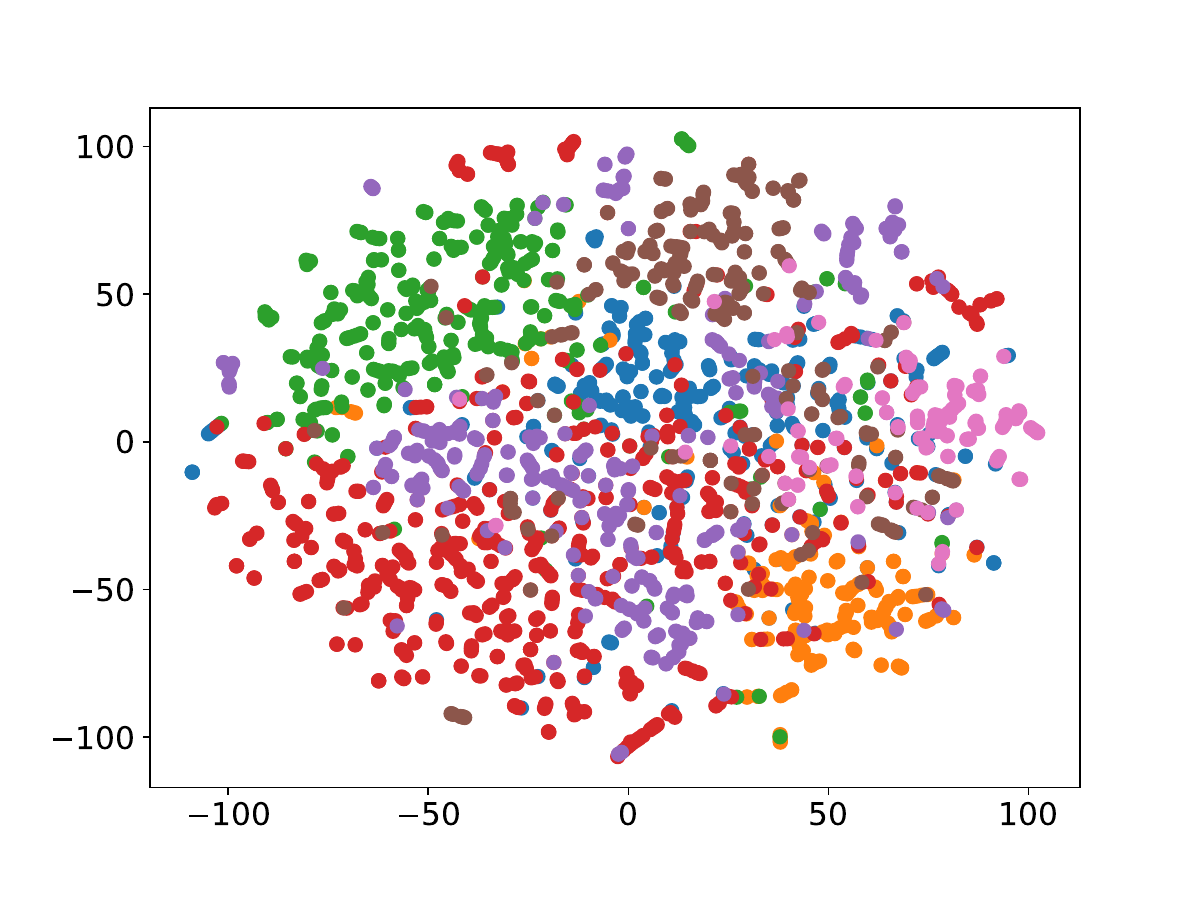}}
        \hfill		
        \subfloat[GCIR-Mettack]{\includegraphics[width=0.24\textwidth,height=3.cm]{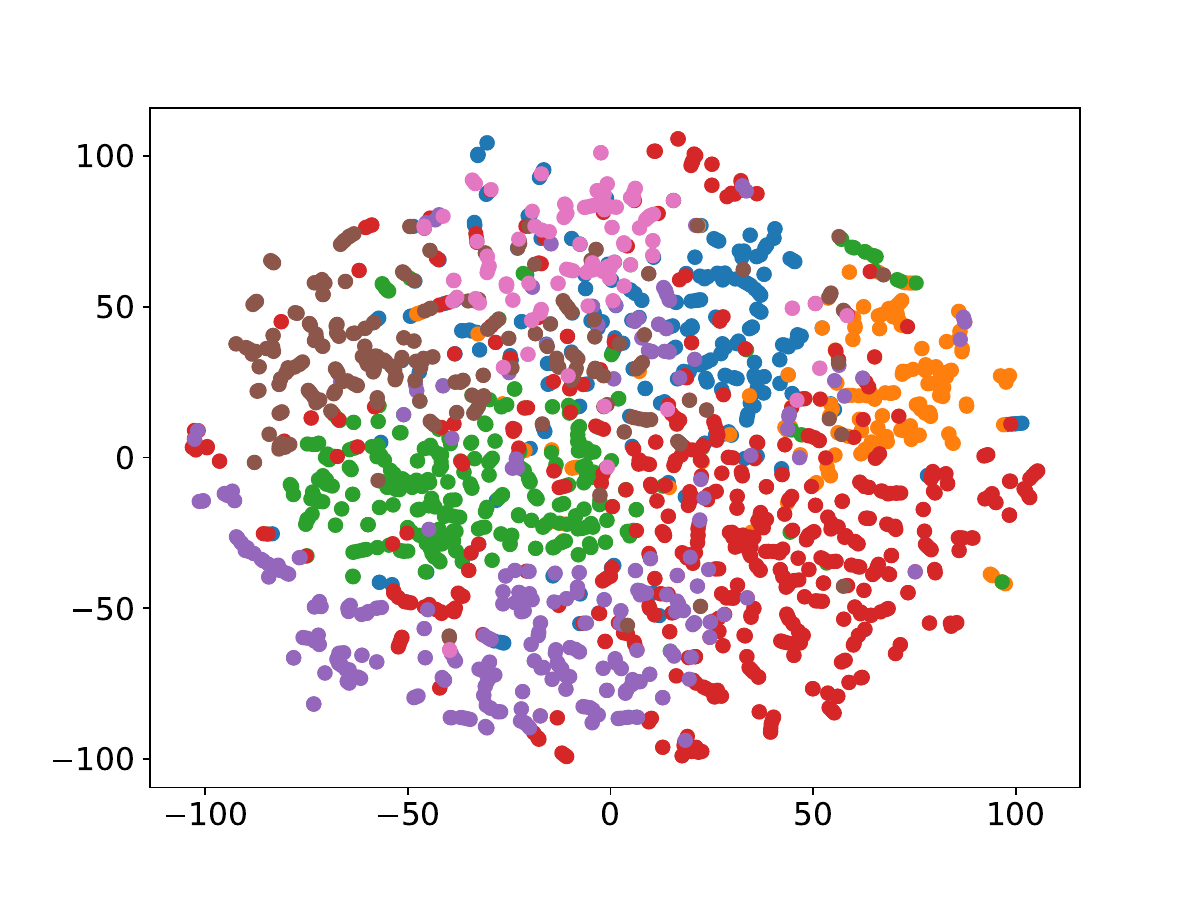}}
        \hfill
        \subfloat[SPAN-CLGA]{\includegraphics[width=0.24\textwidth,height=3.cm]{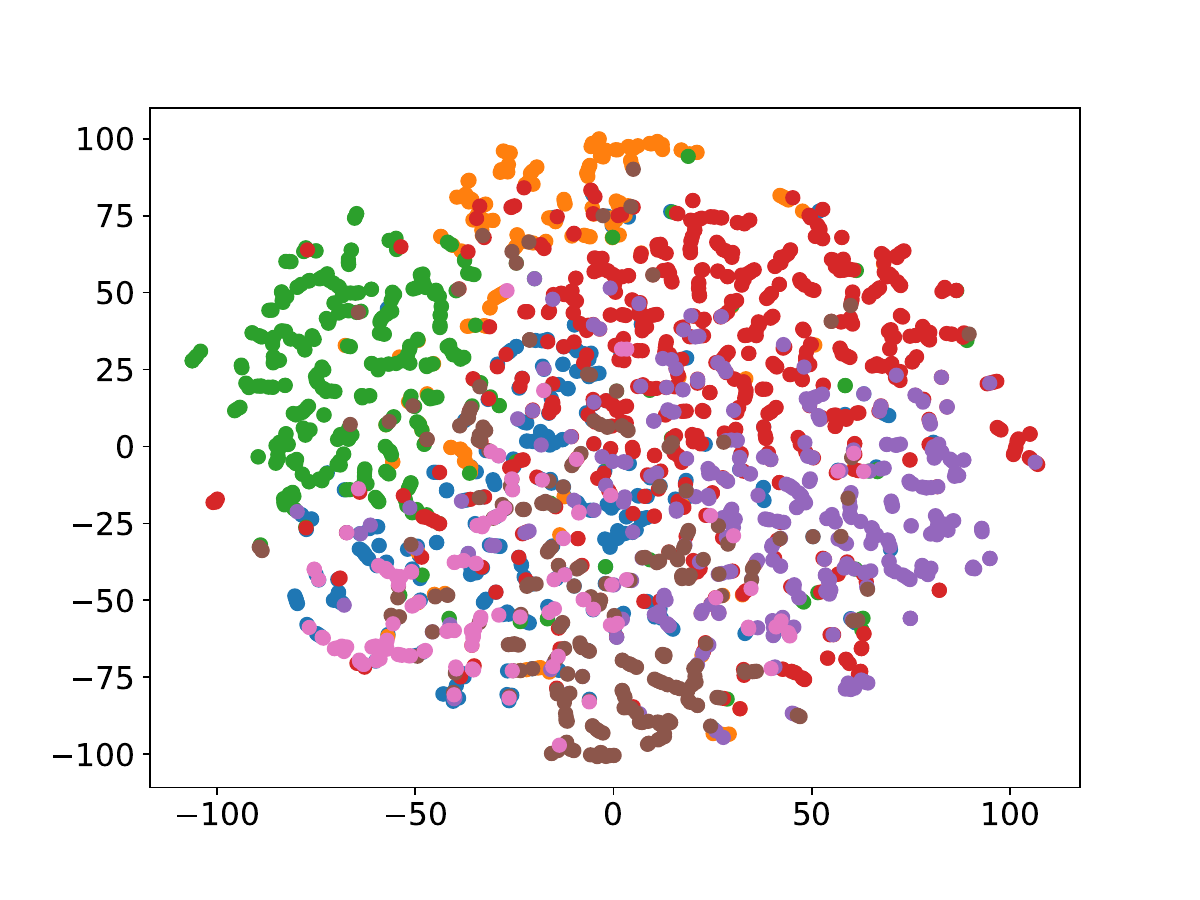}}
        \hfill		
        \subfloat[SPAN-Mettack]{\includegraphics[width=0.24\textwidth,height=3.cm]{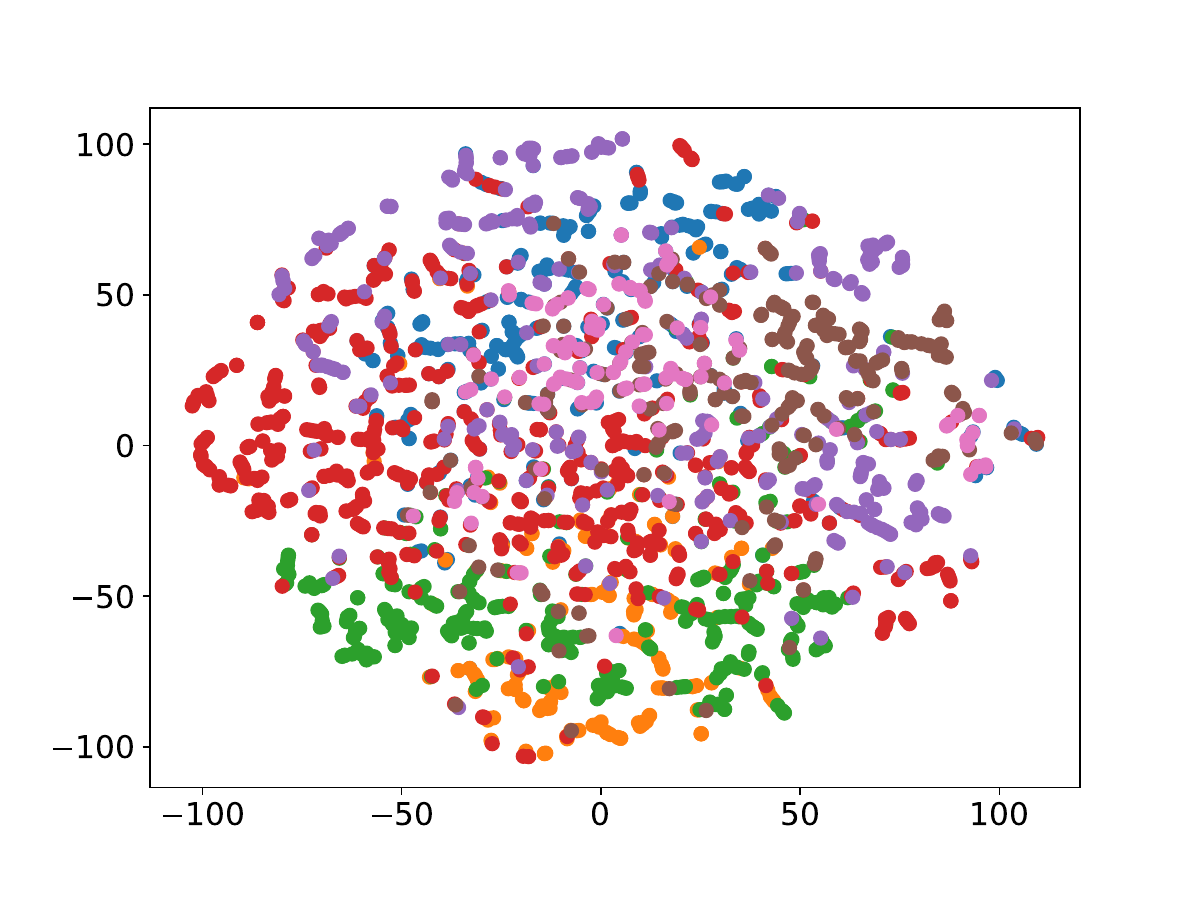}}
        \hfill
        \subfloat[ARIEL-CLGA]{\includegraphics[width=0.24\textwidth,height=3.cm]{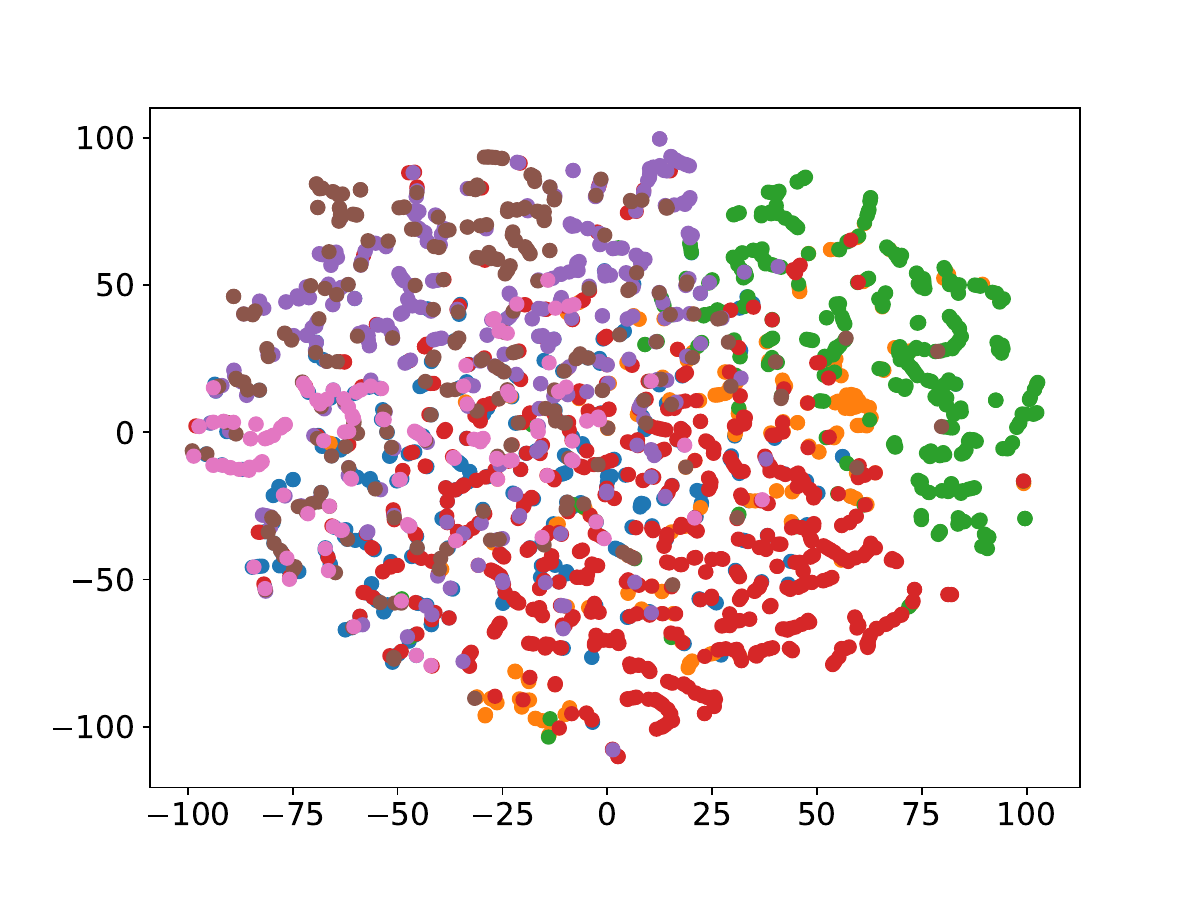}}
        \hfill		
        \subfloat[ARIEL-Mettack]{\includegraphics[width=0.24\textwidth,height=3.cm]{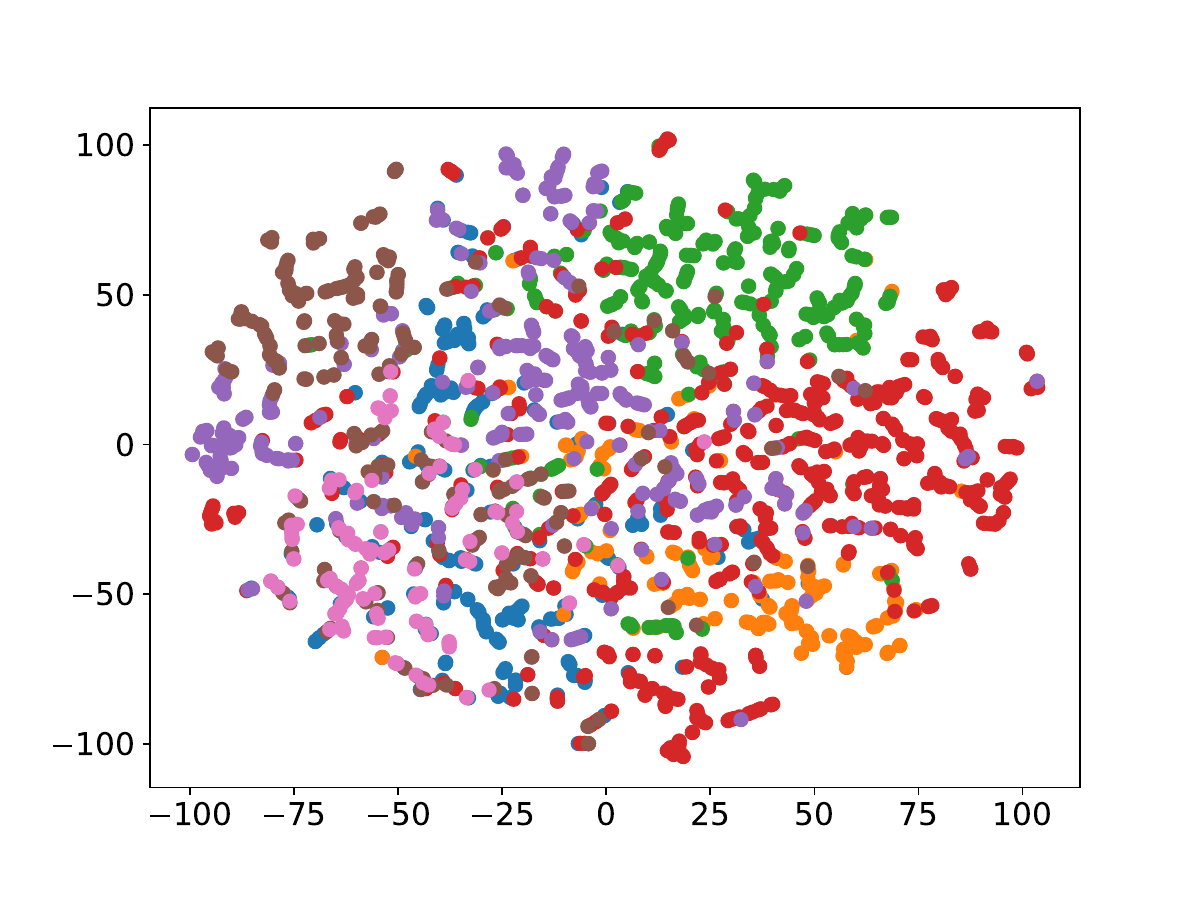}}
	\end{center}
	\caption{Scatterplots for GCIR, SPAN and ARIEL.}
	\label{fig-visual-embs}
\end{figure}

\subsection{Node Embeddings Visualization}
\label{sec-node-embs-visualization}
We provide the node embeddings visualization for \textbf{GCIR}, SPAN, and ARIEL (robust GCLs) to qualitatively evaluate the quality of node embeddings under different attacking scenarios (CLGA and Mettack with attacking power equal to $20\%$) in Fig.~\ref{fig-visual-embs}. The visualization is depicted on the transformed node embeddings generated by \textbf{GCIR}, SPAN, and ARIEL via t-SNE~\cite{tSNE}. Clearly, the node embeddings for \textbf{GCIR} are more discriminative than SPAN and ARIEL and the clusters of \textbf{GCIR} are more cohesive than the others. 

\subsection{Robustness on Graph Representation Vulnerability}
In this section, we introduce the concept of graph representation vulnerability and its relationship with the evaluation of the GCL's robustness. We then test the robustness of the robust GCLs based on the defined metric from an information-theoretical perspective. 

\begin{definition}[GRV~\cite{GRV, LocalGlobal}]
\label{lemma-GRV}
    The graph representation vulnerability (GRV) quantifies the discrimination between the mutual information of the graph and its node representations based on clean graph and poisoned graph:
    \begin{equation}
        \begin{split}
            GRV(G,G^p)=I(G;f_{\theta}(G))-I(G^p;f_{\theta}(G^p)).
        \end{split}
    \end{equation}
\end{definition}
Basically, GRV can be utilized as a metric to quantify the adversarial robustness of an embedding-generating model $f_\theta$. Intuitively, when GRV is smaller, the embeddings generated from the poisoned graph are closer to those generated from the clean graph; that is, the model $f_\theta$ is more robust.  
\begin{definition}[Defense of GCL]
\label{def-GCL-defense}
Given the poisoned graph generated by the graph attacker $G^{p}=\{\mathbf{X}, \mathbf{A}^{p}\}$, the defender aims to train a robust model $f_{\theta^{*}}$
that can minimize the GRV defined in Definition~\ref{lemma-GRV}:
\begin{equation}
    \label{eqn-infomax}
    \begin{split}
        f^{*}=\arg\min_{f\in\mathcal{F}} I(G;f_{\theta}(G))-I(G^p;f_{\theta}(G^p)).
    \end{split}
\end{equation}

where $I(\cdot,\cdot)$ represents the mutual information, $f_{\theta}(\cdot)$ represents the GNN encoder parametrized by weights $\theta$. 
\end{definition}
We note that the clean graph $G$ required to compute GRV is not available during the learning process. Thus, GRV is solely used as a robustness metric in the test phase in addition to the performances over downstream tasks.

We then validate the robust performances of our proposed method with two robust baselines from an information-theoretical perspective. Specifically, we utilize GRV~\cite{GRV, LocalGlobal} mentioned in Def.~\ref{lemma-GRV} to indirectly quantify the GCL's robustness. It is worth noting that GRV measures the difference between the mutual information of the node embeddings for the clean graph and the poisoned graph. Here we utilize the InfoNCE loss to approximate the mutual information. We emphasize that we only use the clean graph's information on computing GRV instead of training the robust model and hence prevent the information leakage. We compare \textbf{GCIR} with SPAN and ARIEL in this part and report the GRVs for each robust model in Fig.~\ref{fig-GRV}. A lower GRV indicates better robustness. It is observed that the \textbf{GCIR} achieves the minimum GRV for both CLGA and Mettack, demonstrating that the node embeddings of \textbf{GCIR} are more robust than those of the other models from an information-theoretical perspective. 

\subsection{Time Cost}
\begin{table}[h]
	\centering
	\caption{Training time per epoch (s).}
	\label{tab-time}
	\resizebox{0.95\columnwidth}{!}{%
	\begin{tabular}{cccccc|ccc}
        \toprule[1.pt]
Dataset & BGRL  & DGI   & MVGRL & GRACE & GCA   & ARIEL & SPAN  & \textbf{GCIR}  \\
\hline
Cora      & $0.011$ & $0.005$ & $0.008$ & $0.008$ & $0.009$ & $0.084$ & $0.245$ & $0.045$ \\
CiteSeer  & $0.017$ & $0.006$ & $0.011$ & $0.010$ & $0.009$ & $0.068$ & $0.174$ & $0.041$ \\
Cora-ML & $0.029$ & $0.006$ & $0.013$ & $0.015$ & $0.011$ & $0.111$ & $0.310$ & $0.067$ \\
Photo & $0.031$ & $0.010$ & $0.019$ & $0.038$ & $0.051$ & $1.056$ & $4.347$ & $0.468$ \\
Computers & $0.059$ & $0.023$ & $0.031$ & $0.099$ & $0.090$ & $5.198$ & $8.694$ & $2.293$ \\
WikiCS & $0.043$ & $0.018$ & $0.026$ & $0.067$ & $0.066$ & $4.085$ & $7.394$ & $1.360$ \\
        \bottomrule[1.pt]
        \end{tabular}}
\end{table}

Tab.~\ref{tab-time} presents the time required for each iteration of the training phase of the GCL models. The last three columns show the time cost of using the learnable augmentation views. It is observed that the GCL model with learnable augmentation views requires more computation time than the others. Moreover, our model consumes the least amount of computation time compared to the other two models with learnable augmentation views.  

\subsection{Discussions on Heterophilic Graphs}

\begin{figure}[h]
	\begin{center}
		\subfloat[Chameleon]{\includegraphics[width=0.48\textwidth,height=4.5cm]{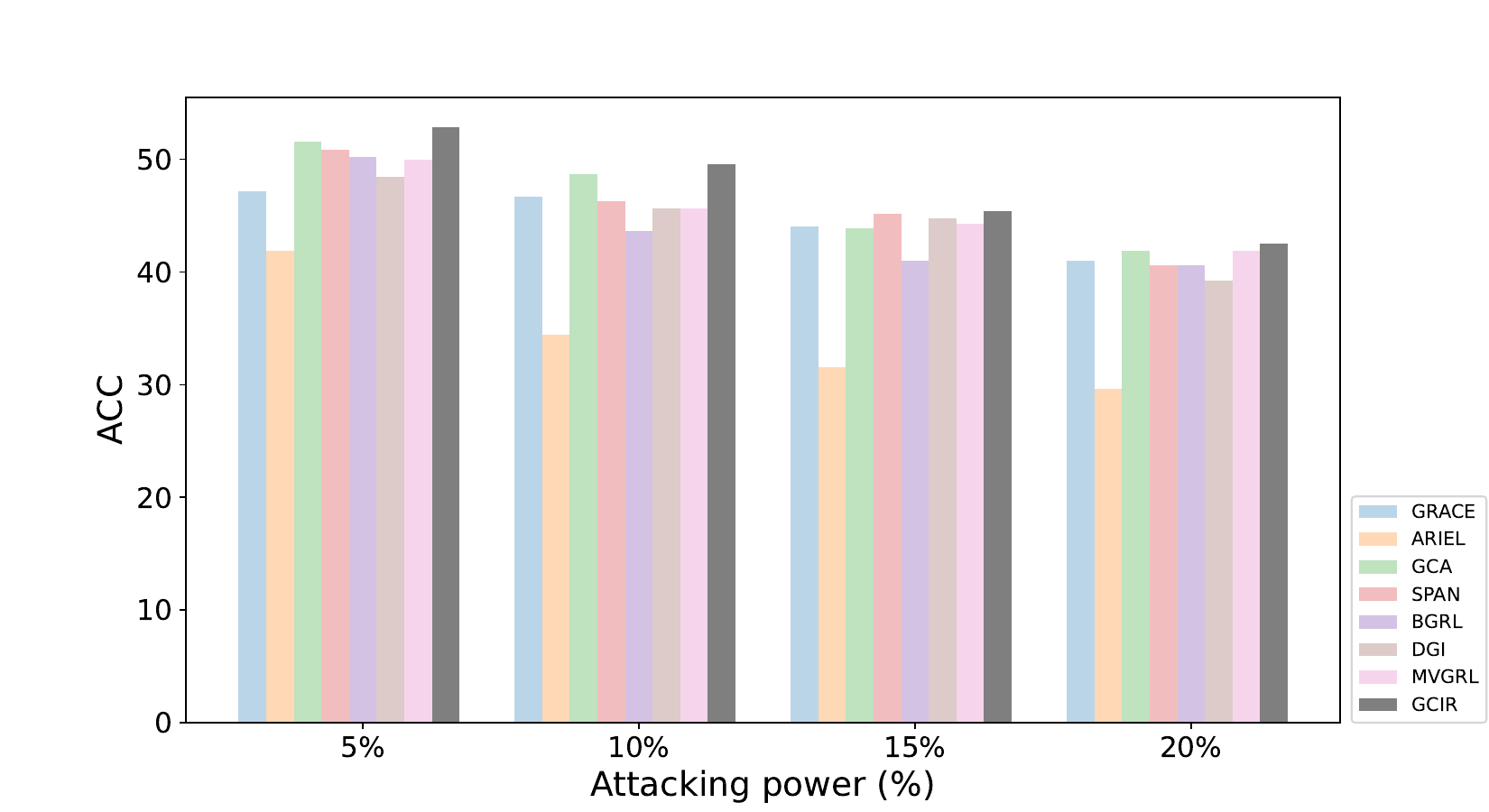}}
        \hfill		
        \subfloat[Squirrel]{\includegraphics[width=0.48\textwidth,height=4.5cm]{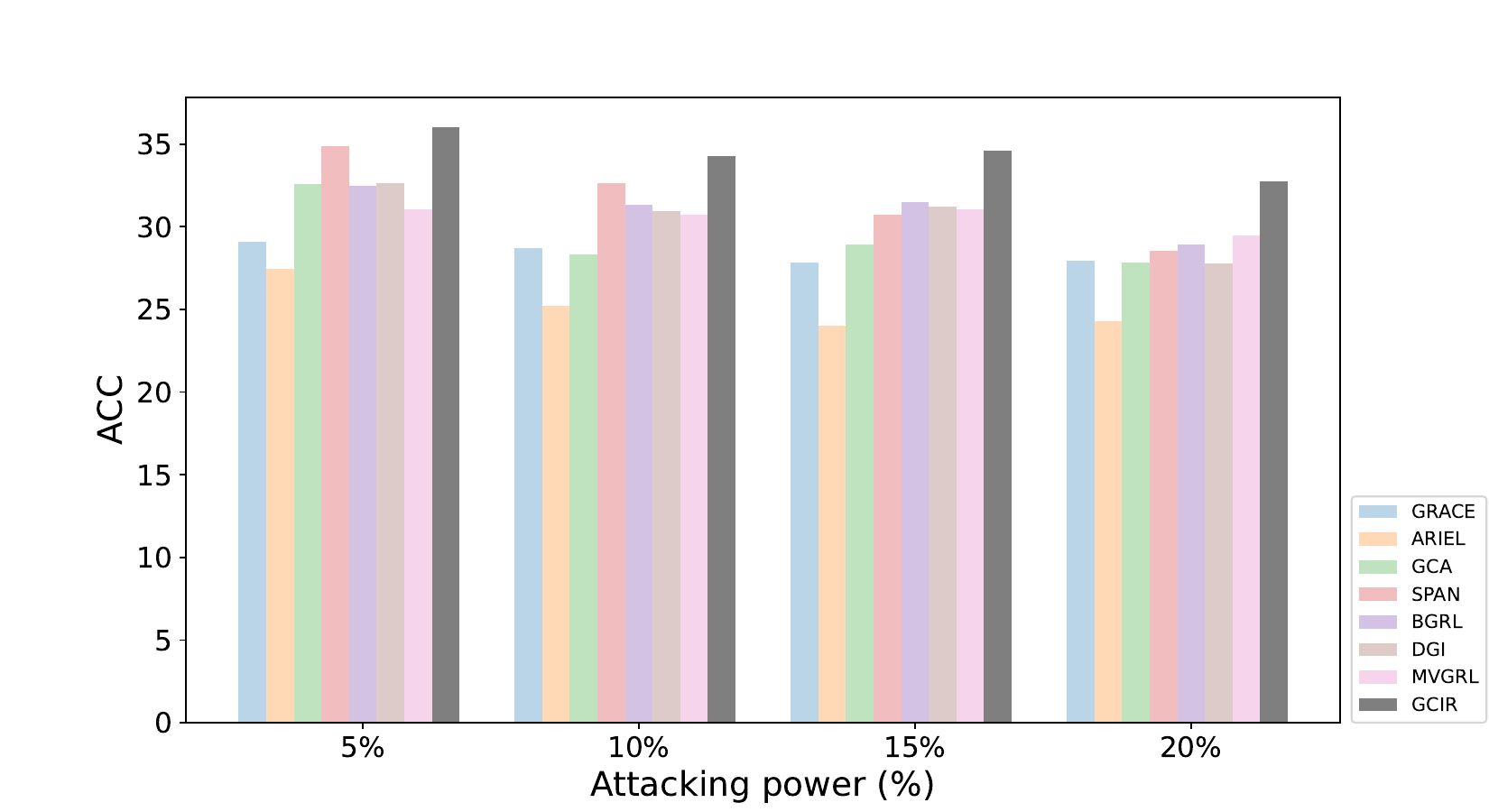}}
	\end{center}
	\caption{Models' Robustness over Heterophilic Graphs.}
	\label{fig-heter}
\end{figure}

In this section, we investigate the adversarial robustness of graph contrastive learning methods on heterophilic graphs. We take two typical heterophilic graphs, i.e., Chameleon and Squirrel~\cite{chameleon} as examplars. Similarly, we deploy Mettack to inject adversarial noises into the clean heterophilic graphs with different attacking powers and evaluate the adversarial robustness of GCLs on the poisoned graphs. The experimental results in Fig.~\ref{fig-heter} demonstrate that our proposed method can still consistently outperform other baselines on heterophilic graphs. It is worth noting that the hyperparameter $\eta$ in Eqn.~\ref{eqn-san-obj-2} is tuned to be zero under this case. It is due to the fact that the feature smoothness is invalid for heterophilic graphs whose node pairs are dissimilar in the majority. Hence, the proposed graph contrastive learning framework solely relies on the information restoration mechanism to achieve robustness. In future work, we will follow this research line and investigate how to further enhance the adversarial robustness of the graph contrastive learning framework over heterophilic graphs while maintaining its robustness over homophilic graphs.

\section{Conclusion}
In this paper, we investigate the adversarial robustness of the GCL against the graph structural attacks from an information-theoretical perspective. The vulnerability analysis offers the clue that apart from the conventional observation that structural attacks tend to diminish the graph homophily, these attacks also degenerate the mutual information estimation between the graph and its representations, which are the cornerstone of the powerful representation learning of GCL. Based on our findings, we propose \textbf{GCIR} with a learnable sanitation view to restore the diminished mutual information and graph homophily after attacks and thus achieve adversarial robustness. The whole framework is trained in an end-to-end learning manner. Besides, we novelly craft an unsupervised tuning strategy to easily determine the best choice of the GCL's vital hyperparameter without avoiding the fully unsupervised setting during the training of GCL. Extensive experiments demonstrate that our method can prominently boost the robustness of the GCL model.   

\section*{Acknowledgement}
This research was partly supported by the Hong Kong RGC Project (No. PolyU25210821).

\bibliographystyle{IEEEtran}
\bibliography{citation}


\end{document}